\documentclass[tikz, 10pt]{amsart}
\usepackage[usenames,dvipsnames]{xcolor}
\usepackage{amsmath,amsthm,amssymb,color,comment,csquotes,enumerate,fancyhdr,filecontents,graphicx,verbatim}
\usepackage[round]{natbib}
\usepackage{pgfplots}
\usepackage{caption}
\usepackage{bbm} 

\usepackage{pgfplots}
\pgfplotsset{compat=1.18}

\usepackage{hyperref}
\hypersetup{colorlinks=true,linkcolor=MidnightBlue,citecolor=MidnightBlue,bookmarks=false,backref=page}

\makeatletter
\newcommand{\customlabel}[2]{%
   \protected@write \@auxout {}{\string \newlabel {#1}{{#2}{\thepage}{#2}{#1}{}} }%
   \hypertarget{#1}{#2}
}
\makeatother

\newtheorem{thrm}{Theorem}
\theoremstyle{definition}

\newtheorem{defn}[thrm]{Definition}

\DeclareMathOperator*{\argmin}{arg\,min}
\DeclareMathOperator{\Expec}{\mathbf{E}}	

\begin{document}

\title[Statistical learning theory and Occam's razor]{Statistical learning theory and Occam's razor: \\ Regularization}
\author[Sterkenburg]{Tom F.\ Sterkenburg}
\date{June 12, 2026. This is a preliminary version. I welcome feedback.}
\address{Munich Center for Mathematical Philosophy (MCMP), LMU Munich \newline \indent Munich Center for Machine Learning (MCML)}
\email{tom.sterkenburg@lmu.de}

\begin{abstract}
The principle of Occam's razor, which instructs us to prefer simplicity in inductive inference, has attracted much scrutiny both in the philosophy of science and in machine learning. In either field, however, a justification for the principle has been elusive. In this paper, building on an earlier ``core argument,'' I spell out a justification from statistical learning theory for the procedure of regularization: for trading off fit for simplicity. The means-ends argument is that in order to profit from theoretical reliability and ``what-you-see-is-what-you-get'' guarantees, one must implement a certain preference for simplicity over fit. This is a genuine methodological justification, which neither collapses to a purely pragmatic principle that we prefer simplicity for its own sake, nor to an ontological assumption that the truth is simple.

\end{abstract}

\maketitle

\vspace{-8mm}

\section{Introduction}

The methodological role of simplicity in scientific theorizing is a long-standing topic in the philosophy of science \citep{Sob15,Bak22sep,DCGGHLMMNRSWWBS25pnas}. In  recent years, following the work of \citet{ForSob94bjps}, a considerable literature has spawned around the lessons we may draw from the role of simplicity in statistics, specifically from methods for model selection (\citealp{ZelKeuMca02}; \citealp[ch.\ 2]{Sob15}; \citealp[sect.\ 7]{Bak22sep}; \citealp[ch.\ 10]{SprHar19}).

The contemporary consensus is that these lessons are limited (\citealp{Kie97bjps}; \citealp{MyrHar02pos}; \citealp[chs.\ 6--7]{Nor21}). The way, say, the Akaike Information Criterion (AIC)  operates will not help us with, say, the rational reconstruction of a  preference for the Copernican to the Ptolemaic system. Even so, the methodology of statistics, including the role of simplicity, remains of independent interest to the philosophy of science, because of the importance of statistical inference to many branches of science \citep{Rom25sep}. The same holds, increasingly so, for machine learning.
 
In the field of machine learning, regular reference is made to Occam's razor, understood broadly as the methodological principle to prefer simplicity \citep{Mit97,DudHarSto01,ShaBen14,GooBenCou16}. These references and associated claims have drawn some critical attention in the philosophy of science \citep{HarKul07,Her20pos,BarCevGne22mam}. The focus of these works is statistical learning theory \citep{Vap00,ShaBen14}, arguably (still) the main mathematical framework for machine learning; and the  question is whether statistical learning theory can support a justification for Occam's razor. What these works illustrate is that to date there has been no successful attempt, neither in the philosophy nor in the machine learning literature, to spell out a justification of this kind.


This paper presents such a---\emph{hopefully}, successful---attempt. Building on a previous ``core argument'' \citep{Ste25mam}, I give a means-ends argument for the methodological norm to trade off empirical data fit with simplicity (what in machine learning is called  \emph{regularization}), in accordance with the learning rule of structural risk minimization (SRM).

The plan is as follows. I start, in section \ref{sec:core}, with introducing the main components of statistical learning theory. These are the ingredients for the core argument, which underpins a first methodological simplicity norm (``keep the model simple''). This norm, however, is limited in its applicability, which motivates, as I discuss in section \ref{sec:srm}, a more general mathematical perspective. This perspective yields the SRM learning rule, which implements explicit regularization and as such a second methodological simplicity norm (``trade fit for simplicity''). In order to spell out the justification for this norm, I take a closer look, in section \ref{sec:srmjust}, at the theoretical justification for the SRM rule. I here discuss the \emph{model-relativity} of theoretical justification, which poses a challenge to the possibility of a non-question-begging justification for a simplicity preference. I answer this challenge in section \ref{sec:arg}, where I give  the means-ends argument for a genuine methodological simplicity norm. I conclude in section \ref{sec:concl}.

In the course of developing my argument, I draw from various earlier insights in the philosophical literature on model selection; and the question may arise what is the point of ``supplementing the debate with a separate analysis of a method which only differs in mathematical details'' \citep[p.\ 772]{Kie01bjps}. My answer is, first, that the framework of statistical learning theory is perhaps not, as \citet[fn.\ 61]{Sob15} has it, ``dramatically'' different from that of the methods that took center stage in the earlier debate (essentially, AIC, and BIC---the Bayesian Information Criterion): but it is still different in important ways. What stands out, for instance, are the robust complexity notion of capacity, and the finite-sample bounds justifying the relevant learning rules. Second, the current work makes some progress in connecting related but independent strands in the (philosophy of) statistics and computer science literatures. Third, I do believe that my argument goes beyond what has been proposed in either literature so far. 

\citet[p.\ 41]{Kie97bjps} offers a meta-reflection on the current kind of project. He writes that the  aim of statistics is  to ``provide scientists with (better or worse) methods,'' whereas ``[p]hilosophy of science is concerned with the \emph{justification} of scientific practices.'' Since we cannot tell  ``whether the use of some given statistical method is justified in a given situation without describing the method and the background assumptions of the results on which it is based in a rigorous way,'' Kiesepp\"a concludes that, when dealing with particular statistical methods and results, the philosophy of statistics 

\begin{quote}
  must
be mathematically more rigorous than statistics itself. Otherwise it is not clear
why such philosophy should be pursued as a field distinct from both statistics
and popular science.
\end{quote}
I would not say that work in the philosophy of machine learning must strive to be more \emph{mathematically} rigorous than work in machine learning itself; the mathematics, after all, is the job of theoreticians of machine learning. However, I do think there is a job for philosophy in spelling out, with higher standards of rigor, the \emph{arguments} that \emph{use} the maths to justify methods or methodological principles.\footnote{Thus,
	to be clear, I do not---with one apparent exception: theorem \ref{thrm:ermfailnonunif} in the appendix---offer new mathematical results: the novelty of my contribution lies in laying down the argument.}
	That is, in any case, the spirit in which I proceed in this paper.

\section{The core argument}\label{sec:core}
Statistical learning theory counts as the 	``received view'' in the theoretical analysis of machine learning algorithms \citep[sec.\ 2]{GroGenSul24pc}.\footnote{The theory sprang out of the pioneering work of \cite{VapChe71tpa}; see \citep{Vap98,Vap00}. I will mainly follow the presentation by \citet{ShaBen14}, which is a synthesis of the Vapnik-Chervonenkis theory and Valiant's \citeyearpar{Val84acm} model of PAC learning.    } 
The most basic type of learning problem, and the exclusive focus of this paper, is binary classification (section \ref{ssec:binclass}). The crucial element in the theoretical analysis is the property of uniform convergence, which underwrites the learning rule of ERM (section \ref{ssec:unifconverm}). The fundamental theorem of statistical learning theory connects uniform convergence and successful learning by ERM with a notion of parsimony or simplicity of sets of classifiers, the VC dimension (section \ref{ssec:capfundthrm}). The core argument for a simplicity preference is assembled from these components (section \ref{ssec:corearg}).

\subsection{Binary classification}\label{ssec:binclass}
 As an ``imaginary and somewhat fanciful example'' \citep[pp.\ 1ff]{DudHarSto01}, imagine that, in the context of a wildlife preservation effort, you want to design a system to automatically keep a tally of the number of specimens of European seabass passing by in a certain coastal river. You set up an underwater camera at a particularly narrow strait, which automatically takes pictures of moving objects. Now the problem is to identify the pictures which indeed captured a seabass and not something else. 
 
You decide to use machine learning. You first inspect a number of pictures taken, and select features which are indicative of a seabass present. Say one feature you choose is the relative size of the main object in the picture, and another is the average brightness of the picture (you have software to calculate those). Then you collect a larger sample of pictures generated over several days, and painstakingly annotate those: seabass or no. 
Finally, you feed this training set of pictures to a learning algorithm, which will automatically infer or learn a general classifier, which will in turn be able to classify new pictures. What is a good learning algorithm?\footnote{In
	all of the following, the focus will  be on the learning or generalization step:  the algorithmic inference from training data to learned classifier. The toy example already shows that this focus is inherently limited. What features you pick is obviously important to the quality of the inference; and in reality there will be several iterations of and between stages of problem formulation, algorithmic learning, and evaluation. However, this paper is concerned with the theoretical justification for Occam's razor, and statistical learning theory is a theory of generalization.}

\subsubsection{The formal framework}
In statistical learning theory, problems of this type are formalized as follows. We have a domain $\mathcal{X}$ of \emph{instances}, which are usually themselves vectors of real-valued \emph{attributes}. In our example, the instances are the pictures, summarized in two real-valued features: size and brightness. If we assume these values are normalized to the unit interval, then the instances are  points in the space $[0,1]^2$. 
We further have a \emph{label} set $\mathcal{Y}$; in the example, there are  two labels, seabass or not (1 or 0). This is  an example of a \emph{binary classification} problem.

A \emph{hypothesis} is a particular classifier, a function $h: \mathcal{X} \rightarrow \mathcal{Y}$ from all possible instances to labels.\footnote{A 
	hypothesis is (more) often called a \emph{model} in machine learning, but due to risk of confusion with other uses of the term (inductive model, model selection) I will stick to hypothesis.} 
	A \emph{learning algorithm} $A$ is a function that receives a training \emph{sample} $S$, a finite ordered sequence of instance-label pairs, and returns a classifier.\footnote{Thus a learning algorithm, as defined here, is really a function. I abstract away in this paper from computational considerations.}

\subsubsection{The goal}\label{sssec:iidgoal}
An essential assumption in statistical learning theory is that training instances as well as new data instances are independently and identically distributed (i.i.d.)\ samples from some true but unknown distribution $\mathcal{D}$ over $\mathcal{X} \times \mathcal{Y}$.\footnote{In 
	the context of the problem of induction, the i.i.d.\ assumption already functions as a kind of ``uniformity of nature'' assumption (cf.\ \citealp[p.\ 29]{ForSob94bjps})---though, as we will see, the problem of induction still remains (cf.\ \citealp[fn.\ 3]{Str09abs}; \citealp{SteGru21syn}). } 
	This distribution thus describes both the selection of instances (new images) and the possibly noisy relationship between features and labels (between object size and average brightness, and depicting a seabass or not).\footnote{How
	well the theory applies to any particular learning problem depends, of course, on how plausible the i.i.d.\ assumption is. \citet[p.\ 586]{Gru07} writes that ``this is one of the few examples of a modeling assumption which may actually be quite realistic in some situations;'' \citet{Sha09abs} is more skeptical. 
Even if the assumption is reasonable for the learning process, the issue of \emph{distribution shift} in domains of application is a threat to the stability of the definition of true risk and as such to the robustness of the learned hypothesis \citep{GroGenSul24pc,FreGro23syn}.}  

This assumption allows us to define, for any given classifier $h$, the probability that it misclassifies a new, randomly generated instance, 
\begin{align}\label{eq:truerisk}
L_\mathcal{D}(h) := \mathbb{P}_{(X,Y) \sim \mathcal{D}}\left[h(X) \neq Y \right].    
\end{align}
This is the \emph{true risk} of $h$, and the goal of learning is to find a classifier which minimizes the true risk. Crucial in the below analysis of what makes for a good learning algorithm will be the gap between an hypothesis' true risk and its average or \emph{empirical error} on the training sample $S$,
\begin{align}\label{eq:emperror}
L_S(h) := \frac{|\{(x,y)\in S: h(x) \neq y \}|}{|S|}.  
\end{align}

\subsubsection{Pattern recognition, curve fitting, model selection}\label{sssec:curvefit}
The problem of classification is not quite the same as the problem of curve fitting usually considered in the philosophical literature (\citealp[ch.\ VIII]{Gly80}; \citealp{ForSob94bjps}). There we assume a set of noisy instances $(x_i,y_i)$ of a functional relationship $f(x)=y$, and the goal is to infer the  curve $f$. This kind of problem is called \emph{regression} in machine learning.

For our purposes, however, more important are the structural similarities between these types of (in early machine learning jargon) ``pattern recognition'' problems. 
First, at a formal level, a binary classifier can be seen to select a subset of the instances (those labeled 1); in our example, a subset of the unit square. A type of classifiers are the \emph{separators}: in our example, curves which cut the square into two parts. For instance, a linear separator is a line through the square, with the instances falling on one side of the line classified positively. A curve fitting problem is thus similar to a type of classification problem, where a classifier separates instances under the curve from instances above the curve (cf.\ \citealp[p.\ 65]{HarKul07}). 

Second, the original curve fitting problem, and generally problems in statistical \emph{model selection}, are usually seen as a two-step procedure. First, we select a model or a family of curves (e.g., the quadratic curves); second, we select a particular curve from the family (e.g., the least-squared-error quadratic). In the classification problem, we also discern these two steps.

\subsubsection{Hypothesis classes}
The first step is the specification of a family or class $\mathcal{H}$ of hypotheses. For instance, we could choose the $\mathcal{H}^\mathrm{pol}_1$ of linear separators, the class $\mathcal{H}^\mathrm{pol}_2$ of quadratic separators, or even the class $\mathcal{H}^\mathrm{pol}$ of all polynomial separators. 

Having chosen a hypothesis class $\mathcal{H}$ ourselves, we now want to delegate the second step to the learning algorithm: to learn, from the training data, a hypothesis with minimal true risk among those in $\mathcal{H}$. 
As we will see, the analysis of what is a good learning algorithm also has ramifications for how to make our initial choice.

\subsection{Uniform convergence and ERM}\label{ssec:unifconverm}
The i.i.d.\ assumption (sect.\ \ref{sssec:iidgoal}) allows us to bring in the law of large numbers, guaranteeing that, for any fixed hypothesis $h$, the empirical error \eqref{eq:emperror} of $h$ will,  as we draw larger and larger samples,  converge  in probability to its true risk \eqref{eq:truerisk}. However, we are not interested in a fixed hypothesis. We are interested in the performance of a learning algorithm, which, depending on the data, can select different hypotheses from chosen hypothesis class $\mathcal{H}$. For this we need something stronger, namely a ``uniform law of large numbers,'' which bounds the difference between empirical errors and true risks of all hypotheses \emph{uniformly}. 


\begin{defn}[Uniform convergence]\label{thrm:uniconv}
Hypothesis class $\mathcal{H}$ has the uniform convergence property if there exists a sample complexity function $m_\mathcal{H}^\mathrm{uc}: (0,1)^2 \rightarrow \mathbb{N}$  such that for all $\epsilon,\delta \in (0,1)$ and for any $\mathcal{D}$, we have for $m \geq m_\mathcal{H}^\mathrm{uc}(\epsilon,\delta)$ that  
\begin{align}\label{eq:unifconv0}
\mathbb{P}_{S \sim \mathcal{D}^m}\left[(\forall h \in \mathcal{H}) \left[ | L_\mathcal{D}(h) - L_{S}(h) | \leq \epsilon\right]\right] \geq 1-\delta.
\end{align}
\end{defn}

\subsubsection{Wysiwyg}

The uniform convergence property gives a ``what-you-see-is-what-you-get'' (``wysiwyg'') bound: for large enough sample size, with high probability, the empirical error of each $h$ (which can be ascertained from the training data---what you see) is  indicative (up to error term $\epsilon$)  of its true risk (what you get):
\begin{align}\label{eq:wysiwyg1}
(\forall h \in \mathcal{H}) \left[  | L_\mathcal{D}(h) - L_S(h)| \leq \epsilon \right].
\end{align}

This property motivates and justifies a basic learning rule.

\subsubsection{Empirical risk minimization}\label{sssec:ermdef}
To draw advice from the uniform convergence bound about what hypothesis a good learning algorithm should return for any given data sample $S$, it is helpful to rewrite it as a bound for any given $m$. To that end, we define the error function $\epsilon_\mathcal{H}: \mathbb{N} \times (0,1) \rightarrow (0,1)$ by
\begin{align}
\epsilon_\mathcal{H}(m,\delta) = \min\{\epsilon \in (0,1): m \geq m_\mathcal{H}^\mathrm{uc}(\epsilon,\delta) \},
\end{align}
that is, as giving the smallest error $\epsilon$ such that an $m$-length sample will with specified high probability satisfy the $\epsilon$-wysiwyg property \eqref{eq:wysiwyg1},
\begin{align}\label{eq:wysiwyg2}
(\forall h \in \mathcal{H}) \left[  | L_\mathcal{D}(h) - L_S(h)| \leq \epsilon_\mathcal{H}(m,\delta) \right].
\end{align}

This bound  directly implies that, for all $h \in \mathcal{H}$ simultaneously,
\begin{align}\label{eq:wysiwyg3} L_\mathcal{D}(h) \leq L_S(h) +  \epsilon_\mathcal{H}^\mathrm{uc}(m,\delta).
\end{align}

Vapnik's first ``inductive principle'' \citeyearpar[p.\ 20]{Vap00}, the learning rule of empirical risk minimization (ERM), is defined by selecting a hypothesis in $\mathcal{H}$ that minimizes this bound on the true error.\footnote{Note
	that the ERM rule is not quite an algorithm, as defined before, because the definition leaves underdetermined how it breaks ties in case of multiple error-minimizing hypotheses. We can ignore this in the current paper, because the theoretical analysis holds for any specific implementation of the ERM rule. I also ignore the challenge of actually implementing (approximations to) ERM, which is a problem of \emph{optimization} (see, e.g., \citealp[ch.\ 5]{HarRec22}).}
	Since the error term is identical for all hypotheses in $\mathcal{H}$, this comes down to picking a hypothesis with minimal empirical error.

\begin{defn}[ERM]
Given hypothesis class $\mathcal{H}$ (with the uniform convergence property). The ERM rule for $\mathcal{H}$ is defined by
\begin{align}\label{eq:erm}
\mathrm{ERM}_\mathcal{H}(S) \in \argmin_{h \in \mathcal{H}} L_S(h) .
\end{align}
\end{defn}

\subsubsection{ERM's theoretical justification (1)}\label{sssec:ermjust1}
If all hypotheses' training errors are good indication of their true risks, then, in order to obtain an hypothesis with minimal true risk, it makes sense to select an hypothesis with minimal training error. 

Formally, by the bound \eqref{eq:wysiwyg2} or \eqref{eq:wysiwyg3} applied to the particular hypothesis selected by ERM, we first have the wysiwyg bound
\begin{align}\label{eq:ermwysiwyg}
 L_\mathcal{D}(\mathrm{ERM}_\mathcal{H}(S)) \leq L_S(\mathrm{ERM}_\mathcal{H}(S)) + \epsilon_\mathcal{H}^\mathrm{uc}(m,\delta).
\end{align}
Next, by the definition of ERM as an empirical error minimizer, and another application of the wysiwyg bound \eqref{eq:wysiwyg2},
\begin{align}
L_S(\mathrm{ERM}_\mathcal{H}(S)) + \epsilon_\mathcal{H}^\mathrm{uc}(m,\delta) &\leq \min_{h \in \mathcal{H}} L_S(h) + \epsilon_\mathcal{H}^\mathrm{uc}(m,\delta) \\
&\leq \min_{h \in \mathcal{H}} L_\mathcal{D}(h) + 2\epsilon_\mathcal{H}^\mathrm{uc}(m,\delta),
\end{align}
so that, with \eqref{eq:ermwysiwyg}, 
\begin{align}\label{eq:ermlearn}
 L_\mathcal{D}(\mathrm{ERM}_\mathcal{H}(S)) \leq \min_{h \in \mathcal{H}} L_\mathcal{D}(h) + 2\epsilon_\mathcal{H}^\mathrm{uc}(m,\delta).
\end{align}

This bound gives a justification for using ERM, as selecting (with high probability, up to a known and constant error term) the best hypothesis in the class.

\subsubsection{Learnability and reliability}\label{sssec:pac}
The notion of \emph{PAC} (``probably approximately correct'') \emph{learnability} that is central in \citet{ShaBen14}'s presentation follows from retranslating this bound again, fixing error instead of sample size. Namely, hypothesis class $\mathcal{H}$ is PAC learnable by ERM$_\mathcal{H}$ if there is a sample complexity function $m_\mathcal{H}: (0,1)^2  \rightarrow \mathbb{N}$ such that for every $\epsilon,\delta \in (0,1)$, every $\mathcal{D}$, 
we have for $m \geq 	m_\mathcal{H}(\epsilon,\delta)$ that for every $h \in \mathcal{H}$,
\begin{align}\label{eq:pacbound}
\mathbb{P}_{S \sim \mathcal{D}^m}\left[ L_\mathcal{D}(\mathrm{ERM}_\mathcal{H}(S)) \leq \min_{h \in \mathcal{H}} L_\mathcal{D}(h) + \epsilon \right] \geq 1-\delta.
\end{align}

We could therefore also refer to bound \eqref{eq:ermlearn} as a learnability bound. I will rather follow \citet{HarKul07} in employing the general label of  \emph{reliability}.

\subsubsection{ERM's theoretical justification (2)}\label{sssec:ermjust2}

A justification for ERM is thus this reliability bound \eqref{eq:ermlearn} of probably selecting the approximately best hypothesis in the class. But the wysiwyg bound \eqref{eq:ermwysiwyg} is also in itself important, because it means that probably the training error will be an indication of how good approximately this selected hypothesis actually is. That means that if our hypothesis class is not good, we will probably be able to tell from the training error, and act accordingly  (cf.\ \citealp[sect.\ 11.3]{ShaBen14}).\footnote{The wysiwyg justification is not emphasized in \citep{Ste25mam}, but it will  be important for the argument from SRM.}
 
 Caution is warranted, though, in interpreting such frequentist guarantees. A pre-sampling wysiwyg guarantee that training error will probably be close to true error does not yet entail that, after sampling and the selection of a particular hypothesis, we can infer from high (low) training error of this hypothesis that its true error is probably high (low), too. (This can be seen most directly from the fact that the probability in the latter assertion must be interpreted epistemically.) Such a conclusion, or a corresponding decision, would need an additional reasoning step, like a Fisherian disjunctive or a Neymanian behaviorial argument.\footnote{See,
 	e.g., \citep{Spr16inc}. Fisher's disjunction, in this context, would be that either training and true error are close, \emph{or} something exceptional has happened; since it is reasonable to rule out the latter, we can conclude the former. The behavioral argument would be that if we always decide to act correspondingly (e.g., discard the selected hypothesis and start over with a different class if training error is high), then in the long run we would only go astray a fraction of the time. } 
	The same holds for the reliability guarantee that ERM probably finds the near-best hypothesis in the class. More could be said about this issue, but in this paper I will simply accept that frequentist guarantees  provide a justification.\footnote{In 
	machine learning practice, one would normally further seek to corroborate that a selected hypothesis is  good by assessing it on an independent test set. 
Since we then have a single fixed hypothesis, we can from the usual law of large numbers (more specifically, Hoeffding's equality) derive a tight wysiwyg bound on the probable difference between the hypothesis' test and true error  \citep[thrm.\ 11.1]{ShaBen14}. Of course, this is still a frequentist bound, so the interpretational issue remains.}

In sum, we have a two-fold theoretical 
justification for ERM, a reliability and a wysiwyg bound---provided that the hypothesis class has the uniform convergence property. But what kind of hypothesis classes satisfy this property?

\subsection{Capacity and the fundamental theorem}\label{ssec:capfundthrm}
The short answer is, \emph{simple} hypothesis classes.

\subsubsection{The VC dimension}
Notions of \emph{capacity} in machine learning are notions of the richness or flexibility of  hypothesis classes, in the sense that a higher-capacity hypothesis class can more easily fit a variety of training data. More precisely, in our framework of binary classification, for a given finite set $X=\{ x_1, \dots, x_m\}$ of instances, a hypothesis class $\mathcal{H}$ has maximal such flexibility-of-fit if, for  each of the $2^m$ possible binary labelings of $X$, there is some $h$ in $\mathcal{H}$ which gives exactly this labeling. In that case we say that $\mathcal{H}$ \emph{shatters}  $X$, and the relevant notion of the capacity of a hypothesis class in our framework is a measure of its ability to shatter sets of instances.

\begin{defn}
The \emph{VC dimension} of hypothesis class $\mathcal{H}$ 
is the maximal size of a set $X \subset \mathcal{X}$ that is shattered by $\mathcal{H}$. If $\mathcal{H}$ shatters sets of arbitarily large size, then the VC dimension of $\mathcal{H}$ is infinite. A \emph{VC class} is a class with finite VC dimension. 
\end{defn}

%

For example, the hypothesis class $\mathcal{H}^\mathrm{pol}_1$ of linear separators has a strictly smaller VC dimension than the class $\mathcal{H}^\mathrm{pol}_2$ of quadratic separators. Both are still VC classes, which is no longer so for the class  $\mathcal{H}^\mathrm{pol}$ of \emph{all} polynomial separators. Another example of a class with infinite VC dimension is the class $\mathcal{H}^\mathrm{sine}=\{ x \mapsto \sin \alpha x \}_{\alpha \in \mathbb{R}}$ of sine functions. No matter how large a given finite set of labeled instances, the corresponding separation in positive and negative classes can be achieved by a sine function with apposite choice of period parameter $\alpha$ \citep[p.\ 82]{Vap00}.

\subsubsection{Simplicity}\label{sssec:simpl}
The latter example shows that this notion of capacity does not need to align with traditional notions of simplicity or parsimony in terms of number of parameters. Such an alignment still holds nicely for the polynomials, where VCdim($\mathcal{H}^\mathrm{pol}_i) <$ VCdim($\mathcal{H}^\mathrm{pol}_j$) for degrees $i<j$; but the class of sine functions above is specified with only one parameter, yet has maximal capacity.

However, as discussed by \citet[sect.\ 3]{Ste25mam}, the capacity notion of VC dimension  gives \emph{a} plausible notion of simplicity. Capacity formalizes a conception of complexity as flexibility-of-fit, where a simpler, more parsimonious class covers fewer possible patterns.\footnote{\citet{DCGGHLMMNRSWWBS25pnas}, 
	in a recent overview of Occam's razor in science, call this \emph{parsimony by constraints}.} Moreover, capacity expresses an ``inherent'' complexity of a class \citep{Rom17pos,Gru07}, because it is invariant to inessential choices of how to describe the class. This is a kind of robustness which definitions of the complexity of individual hypotheses (and definitions of the complexity of classes derived from those) must lack.\footnote{Definitions 
	that seek to capture how ``bumpy'' or ``wiggly'' a single hypothesis looks are inevitably not robust under trivial 1-1 transformations of the coordinate space \citep[p.\ 783]{Kie01bjps}. Similarly \citep{Pri76pos}, parameter counting is not robust under trivial redescriptions which introduce fewer or more parameters (\citealp[sect.\ 7]{Tur90bjps}; \citealp[pp.\ 34f]{Kie97bjps}).}\textsuperscript{,}\footnote{Capacity 
	is in a sense independent of the ``complexity'' of individual hypotheses: a class with a small number of highly ``complex'' hypotheses (say, high-$n$-degree polynomials in your favorite coordinate system) still has small capacity (see again \citealp{Ste25mam}). The best that can be said is that ways of quantitatively defining the simplicity of individual objects (like in terms of description length) will normally be such that there are \emph{more} complex than simple objects, so that the class with all objects at a certain complexity level will be larger(-capacity) with a higher level. This could go some way towards an explanation why capacity often tracks conceptions of the simplicity of individual hypotheses.} 

More precisely, notions of capacity are invariant under redescriptions which leave the structure of the learning problem untouched---which is an immediate corollary of their role in formal learning guarantees.\footnote{For 
	instance, \citet[sect.\ 5]{Ste09jpl} writes that while it ``cannot be altered by faithful translations of the hypotheses it contains, VC dimension does depend on what counts as an
individual data unit,'' and considers redescribing the learning problem by collapsing every pair of observations into a single one. This makes the VC dimensions smaller; but it also changes the problem, as data sizes are now artificially smaller, too. The formal connection (as given by theorem \ref{thrm:funpac} below) between capacity and data sizes for bounds of uniform convergence  is, as it must, preserved.}
	Indeed, the primary motivation for the definition of VC dimension is not an independent intuitive appeal as a simplicity measure (even if, again, it has such an appeal), but a provable connection, indeed \emph{equivalence}, to uniform convergence.


	
\subsubsection{The fundamental theorem}
The central result of \citet{VapChe71tpa}, celebrated as the fundamental theorem of statistical learning theory \citep[thrm.\ 6.7]{ShaBen14}, is that finite VC dimension characterizes uniform convergence. 

\begin{thrm}[The fundamental theorem]\label{thrm:funpac}
An hypothesis class has the uniform convergence property if and only if it is a VC class.
%
\end{thrm}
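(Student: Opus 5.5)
The proof splits into two directions; the hard one is that a VC class has the uniform convergence property.

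\emph{Only if (uniform convergence implies finite VC dimension).} I argue by contraposition. Suppose $\mathcal{H}$ shatters sets of arbitrarily large size, fix some $m$, and pick a shattered set $C$ of size $2m$. I take $\mathcal{D}$ uniform on $C$, with labels given by a target labeling $f$ drawn uniformly from the $2^{2m}$ labelings; each such $f$ is realized by some $h_f \in \mathcal{H}$, so $L_\mathcal{D}(h_f)=0$. A sample of size $m$ covers at most half of $C$. Shattering then supplies some $h\in\mathcal{H}$ that agrees with the sample labels on the seen points and disagrees with $f$ on every unseen point. This $h$ has $L_S(h)=0$ but $L_\mathcal{D}(h)\ge 1/2$. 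So for every sample (and hence every $\mathcal{D}$ of this form) the event in \eqref{eq:unifconv0} fails with $\epsilon<1/2$. Because $m$ was arbitrary, no sample complexity function $m^\mathrm{uc}_\mathcal{H}(\epsilon,\delta)$ can exist. In fact the argument shows even more: no algorithm, in particular not ERM, is a PAC learner in the sense of \eqref{eq:pacbound}. That is the familiar no-free-lunch argument.

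\emph{If (finite VC dimension implies uniform convergence).} I proceed in three steps.

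First, I use the Sauer--Shelah lemma. If $\mathrm{VCdim}(\mathcal{H})=d<\infty$, the growth function $\tau_\mathcal{H}(m)$, defined as the maximal number of distinct labelings $\mathcal{H}$ induces on $m$ points, satisfies $\tau_\mathcal{H}(m)\le\sum_{i\le d}\binom{m}{i}\le (em/d)^d$. I prove this by induction on $m+d$. The step removes one point and splits the restrictions of $\mathcal{H}$ into those that remain distinct and those that collapse into pairs; the collapsing part has VC dimension at most $d-1$.

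Second, I use symmetrization, the ghost-sample step. I bound $\mathbb{P}[\sup_h|L_\mathcal{D}(h)-L_S(h)|>\epsilon]$ by $2\,\mathbb{P}[\sup_h|L_{S'}(h)-L_S(h)|>\epsilon/2]$, where $S'$ is an independent sample of size $m$. This holds once $m\epsilon^2$ is large enough, by Chebyshev applied to the near-optimal $h$. The point of the step is that the supremum now depends only on the $2m$ points of $S\cup S'$. On those points $\mathcal{H}$ acts through at most $\tau_\mathcal{H}(2m)$ labelings.

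Third, I randomly swap corresponding pairs between $S$ and $S'$, apply Hoeffding's inequality to each fixed labeling, and take a union bound. This gives a bound of the form $4\tau_\mathcal{H}(2m)e^{-m\epsilon^2/8}$. By the first step this is polynomial in $m$ times an exponential decay, so it drops below $\delta$ for $m\ge C\,(d\log(d/\epsilon)+\log(1/\delta))/\epsilon^2$, and this expression serves as $m^\mathrm{uc}_\mathcal{H}(\epsilon,\delta)$.

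I expect the main obstacle to be the symmetrization step. It has to turn a supremum over a possibly uncountable class against an unknown $\mathcal{D}$ into a finite combinatorial problem, and the conditioning and measurability details need care. An alternative I would keep in reserve is to bound $\Expec\sup_h|L_\mathcal{D}(h)-L_S(h)|$ through Rademacher complexity and Massart's lemma, and then concentrate around that expectation with McDiarmid's inequality, as in \citet[ch.\ 6, 26]{ShaBen14}.
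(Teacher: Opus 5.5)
Your proof is correct. The paper gives no proof of its own. It cites \citet{VapChe71tpa} and \citet[thrm.\ 6.7]{ShaBen14}, noting that in the latter the no-free-lunch theorem is part of the proof. Your ``if'' direction, with its $4\tau_{\mathcal{H}}(2m)e^{-m\epsilon^2/8}$ tail bound, is essentially the original Vapnik--Chervonenkis argument. It departs in two places from the Shalev-Shwartz and Ben-David proof the paper leans on.

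For ``only if,'' they pass through learnability: uniform convergence makes ERM an agnostic PAC learner, and the no-free-lunch theorem then forces finite VC dimension. That theorem averages over random labelings of a shattered $2m$-set and so defeats every learner. You instead refute \eqref{eq:unifconv0} directly, because shattering gives you, for each sample, one $h$ with $L_S(h)=0$ and $L_{\mathcal{D}}(h)\geq 1/2$. This is more elementary and needs no randomness in $f$; a fixed labeling does it. By the same token, your aside that the argument ``shows even more'' is not earned. One bad empirical minimizer defeats only ERM with adversarial tie-breaking. The claim about arbitrary learners needs exactly the averaging over $f$ that you set up but never use: the learner's output on unseen points is independent of $f$ there.

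For ``if,'' they instead bound $\Expec\sup_h|L_{\mathcal{D}}(h)-L_S(h)|$. They symmetrize by Jensen, which needs no lower bound on $m\epsilon^2$ and no choice of a bad $h$ per sample, and finish with Markov. Their sample complexity is therefore polynomial in $1/\delta$. Your tail bound gives $\log(1/\delta)$ directly, and hence the $C(d\log(d/\epsilon)+\log(1/\delta))/\epsilon^2$ that they reach only in their ch.\ 28, via Rademacher complexity and McDiarmid. That is the reserve route you mention.

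Either route suffices for the qualitative theorem. Neither gives the log-free rate \eqref{eq:wysiwygerrorquant1}--\eqref{eq:wysiwygerrorquant2} that the paper quotes right after it; to get that, the extra logarithmic factor has to be removed by chaining.
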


In particular, ERM$_\mathcal{H}$ satisfies the learnability bound \eqref{eq:ermlearn} precisely if 
hypothesis class $\mathcal{H}$ has finite VC dimension.\footnote{Learnability is indeed equivalent to learnability by ERM: ERM satisfies learnability if any algorithm does \citep[thrm.\ 6.7]{ShaBen14}.} Further, a more fine-grained, quantitative statement of the fundamental theorem tells us that the learnability bound depends on the VC dimension. Namely, we have that, for some constant $c$,
\begin{align}\label{eq:wysiwygerrorquant1}
 \epsilon^\mathrm{uc}_\mathcal{H}(m,\delta) = c B,
\end{align}
where
\begin{align}\label{eq:wysiwygerrorquant2}
B = \sqrt{\frac{\mathrm{VCdim}(\mathcal{H})-\log \delta}{m}}.
\end{align}

\subsection{The argument}\label{ssec:corearg}
The methodological lesson from the fundamental theorem is that \emph{in order to} profit from the wysiwyg and reliability guarantees, one must keep $\mathcal{H}$ simple. This is a means-ends argument for simplicity.

\subsubsection{Means-ends} I borrow this terminology from the approach in the philosophy of science which goes by the name of \emph{formal learning theory} or also \emph{means-ends epistemology} (\citealp{Kel96,Gen18phd,Sch17sep}; also see \citealp[sect.\ 4.2]{Ste25mam}). The driving idea here is that  inductive problems should be analyzed in terms of what types of reliability (ends) are attainable with what assumptions and methods (means); a perspective which aligns well with machine learning theory.\footnote{The approach indeed grew out of the theoretical computer science branch of \emph{algorithmic learning theory} \citep{JaiOshRoySha99}.} 

For instance, we can formulate, for a certain type of learning problem, an interesting notion of reliability, and ask for what kind of assumptions this end is achievable. This question would be answered by a characterization theorem, which lays down the necessary and sufficient conditions for the attainability of this end. In Kelly's words, ``such results may be thought of as \emph{transcendental deductions} for
reliable inductive inference, since they show what sort of knowledge
is necessary if reliable inductive inference is to be possible''  \citeyearpar[p.\ 74]{Kel96}.

The fundamental theorem is such a characterization result in  our setting. It makes precise what sort of knowledge (a low-complexity hypothesis class) is necessary for reliable inductive inference (a learnability guarantee for ERM). This gives a means-ends reason to choose a simple hypothesis class.

\subsubsection{The norm}\label{sssec:normn1} 
The means-ends argument turns the theoretical justification for ERM into a methodological justification for the norm of preferring simplicity. The means-ends argument can be made precise as an argument from two premises.\footnote{Thanks to an anonymous reviewer for this suggestion.} 

\renewcommand{\labelenumi}{\arabic{enumi}.}

\begin{enumerate}
\item In order to have the wysiwyg and reliability guarantees \eqref{eq:ermwysiwyg} and \eqref{eq:ermlearn}, one must keep the hypothesis class simple.
\item One should aim to have the wysiwyg and reliability guarantees \eqref{eq:ermwysiwyg} and \eqref{eq:ermlearn}.
\end{enumerate}

The conclusion of this argument yields the methodological simplicity norm that one should aim to keep the hypothesis class simple.

\begin{quotation}
\textbf{Methodological norm \customlabel{norm:m1}{N1}(Occam's razor).} Keep the hypothesis class simple.
\end{quotation}

For example, \eqref{norm:m1} advises against the use of the class $\mathcal{H}^\mathrm{pol}$ of all polynomials, because it is too complex---too complex to retain the guarantees for ERM. Moreover, while I stated the argument and the norm here in a categorical manner, the quantitative version of the fundamental theorem supports a more continuous reading: in order to have stronger guarantees, one should aim to keep the model simpler. For instance, the norm advises us to prefer the class $\mathcal{H}^\mathrm{pol}_1$ of linear separators to the class $\mathcal{H}^\mathrm{pol}_2$ of quadratic separators, because the former comes with stronger guarantees for ERM.\footnote{A
	slightly different gloss on the theoretical guarantees is that they ``tell us \emph{how rich a
function space} we can afford to search on the `budget' given by our sample size, while maintaining the quality of the estimate of the accuracy of the best‐fitting function'' \citep[p.\ 3]{GroGenSul24pc}. But for given ``budget'' simpler is theoretically still better because the guarantees are stronger.}

\subsubsection{Constraints and limitations}\label{sssec:norm1constrs}
However, if we follow this norm all the way, then we end up choosing a \emph{maximally} simple class, a class of VC dimension 0. That clearly does not make sense in any actual learning problem, since a maximally simple class of VC dimension 0 is a singleton class with only one classifier, meaning that there would be no learning problem left.



This exposes the weak point in the argument, namely the second premise. The extent to which one should or reasonably could aim for the relevant guarantees depends on the context of the actual learning problem, in which further epistemic as well as pragmatic factors are at play. For one thing, practical considerations dictate what error and/or confidence bounds would be acceptable, or what training sample sizes are available to us. For another, an essential epistemic factor is what we know about the domain, which informs what assumptions we would be willing to make, and so what hypothesis classes we would still find reasonable. 


The latter is especially important because the reliability justification for ERM is \emph{model-relative} \citep{SteGru21syn}: it is about locating the best hypothesis in the \emph{inductive model}, the hypothesis class. The best hypothesis in the class might still not be very good; so it is important to also choose a good class, a class which we expect contains good classifiers. In machine learning terminology, we want the inductive model to have a good \emph{inductive bias}.


These considerations act as a check on the theoretical push towards simplicity that is encoded in \eqref{norm:m1}. The epistemic norm \eqref{norm:m1}, underwritten by the theoretical justification for a simple inductive model, pushes us in the direction of simplicity; but how far we can go in this direction, to what extent we can reasonably follow this methodological norm (to what extent we go along with premise 2) depends on the specifics of the actual learning situation. In particular, our knowledge about the domain informs what hypothesis classes could still be expected to contain good classifiers. This knowledge normally acts as a check on or  counterpull to the simplicity norm, because simpler hypothesis classes generally mean stronger assumptions.

There are thus pragmatic and epistemic constraints to the application of simplicity norm \eqref{norm:m1}. But at this point the worry arises that these factors are normally so constraining that \eqref{norm:m1} is hardly applicable at all.

\subsubsection{Beyond the core argument}
In the older days of machine learning, we can see the simplicity norm \eqref{norm:m1} routinely evoked and followed.\footnote{For 
	instance, in their seminal paper introducing convolutional neural nets for handwritten digit recognition, \citet[p.\ 541]{LBDHHHJ89nc} write that ``the basic design principle is to reduce the number of free parameters in the network as much as possible without overly reducing its computational power. Application of this principle increases the probability of correct generalization because it results in a specialized network architecture that has [\dots]\ a reduced Vapnik-Chervonenkis dimensionality.''} 
	But certainly since the deep learning age the field is rather characterized by a lack of strong modeling assumptions. In the words of \citet{BarMonRak21ac}, ``deep learning is a data-driven approach: these are rich but generic models, and the architecture, parametrization and nonlinearities are typically chosen without reference to a specific model for the process generating the data.'' This does not sit well with the simplicity norm \eqref{norm:m1}, because simple (small-capacity) hypothesis classes are much more restrictive, and as such do commit one to strong assumptions. 
	The worry is therefore that simplicity norm \eqref{norm:m1} is no longer relevant to modern machine learning.\footnote{Another
	worry is that this version of Occam's razor is ``a rather different
recommendation than the usual exhortation to select the simplest hypothesis compatible with the data'' \citep[fn.\ 6]{GroGenSul24pc}. The simplicity norm to be discussed in the following, building on the core argument but underwriting regularization in the learning, is a more direct instantiation of the usual Occam norm to prefer simplicity in the inductive inference from data to hypothesis.}
	
Simplicity does still appear to play an important methodological role, however, even in deep learning. It may no longer play a significant role in the initial choice of hypothesis class, but it does pop up again in the subsequent learning process, namely in the standard technique of \emph{regularization}. The theoretical underpinnings for this technique lead us to Vapnik's second ``inductive principle,'' and a methodological simplicity argument which expands on the core argument of this section.


\section{Structural risk minimization}\label{sec:srm}

\subsection{Generalized uniform convergence and SRM} 

The push towards simplicity encoded in \eqref{norm:m1} is checked or countered by, in particular, the assumptions we are willing to make. Simpler inductive models generally also encode stronger assumptions, stronger inductive biases: the best hypothesis in a simpler class might be expected to be worse than the best hypothesis in a more complex class. This tension between simpler classes with stronger inductive bias and more complex classes with weaker inductive bias is expressed in a classical  trade-off.

\subsubsection{The bias-complexity trade-off} 
The true risk of a learned hypothesis $\hat{h}$ can trivially be decomposed as a sum of two errors, the true risk of the best hypothesis in the given hypothesis class $\mathcal{H}$ (the \emph{approximation error}) and the difference between the true risk of $\hat{h}$ and that of the best hypothesis  $\mathcal{H}$ (the \emph{estimation error}):
\begin{align}
L_\mathcal{D}(\hat{h}) = \underbrace{\min_{h \in \mathcal{H}} L_\mathcal{D}(h)}_{\textrm{appr.\ error}} + \underbrace{L_\mathcal{D}(\hat{h}) -\min_{h \in \mathcal{H}} L_\mathcal{D}(h)}_{\textrm{est.\ error}}.
\end{align}

The interplay between these two terms is depicted in figure \ref{fig:biasvar}. The simplicity norm \eqref{norm:m1} is concerned with minimizing the estimation error, or preventing \emph{overfitting}: the selection of a hypothesis which is significantly worse than the best in the class. It pushes us to the left along the x-axis, to simpler hypothesis classes, with (provably, with high probability) smaller difference between the two curves. However, at a more informal level, less complex classes might be expected to contain fewer good hypotheses, so that the approximation error is higher. To prevent \emph{underfitting}, when even the best hypothesis in the class is not good, we would need to move to the right, towards more complex classes. 

\begin{figure}
\begin{tikzpicture}
  \begin{axis}[
      width=10cm,
      height=6cm,
      axis lines=middle,
      xlabel={capacity of $\mathcal{H}$},
      xlabel style={yshift=-1.8em},
      xtick=\empty,                
      ytick=\empty, 
      xmin=0, xmax=10,
      ymin=0, ymax=6,
      domain=0:10,
      tick label style={font=\tiny,opacity=0},
    ]

    \addplot[thick] {0.15*(x-5)^2 + 1.7}
      node[pos=0.35,anchor=south,yshift=3mm, 
             ] {\small true risk};

    \addplot[thick, dashed] {5.25 * 1.35^(-x - 0.1 * x^2)+0.2}
    node[pos=0.60,anchor=south,yshift=2mm,xshift=1mm,
             ] {\small appr.\ error};;

 \pgfmathsetmacro{\yVar}{0.4}          
    \pgfmathsetmacro{\yTot}{3.05} 

    \draw[densely dashed,very thick] 
        (axis cs:8,\yVar) -- (axis cs:8,\yTot);

    \node[rotate=90,above] at (axis cs:8,{(\yVar+\yTot)/2})
          {\small est.\ error};

  \end{axis}
\end{tikzpicture}
\caption{The bias-complexity trade-off.}\label{fig:biasvar}
\end{figure}

There is an asymmetry here, in that the theory, and the resulting epistemic norm \eqref{norm:m1}, covers only one side, namely the minimization of estimation error. In practice, we would want to find the sweet spot in the figure, where the selected hypothesis does not just have low estimation error, but low total error, low true risk. This asks for a more informal assessment what  class, given what we know about the specific learning problem, is likely to have good hypotheses, to have low approximation error. Such an assessment constrains the application of norm \eqref{norm:m1}.\footnote{The asymmetry in the theoretical focus on estimation error could thus be seen as problematic for the reliability justification for a simple class: the model-relative reliability of ERM counts for little if the approximation error is high. This, however, leaves untouched the wysiwyg justification: it will always be helpful to get an indication of \emph{how} good or bad the (best in the) class is. Also see \citep[p.\ 34]{Gru07} on ``the inherent difference between under- and overfitting.''}

Purely theoretical analysis cannot tell us how to strike the right balance in the choice of hypothesis class prior to the learning. However, it turns out that, to some extent, the theory can tell us how to strike the balance \emph{in} the learning. The key is a generalization of the uniform convergence theorem, that moves the analysis to the level of \emph{multiple} hypothesis classes.


\subsubsection{Generalized uniform convergence} 
Instead of a single hypothesis class somewhere on the complexity-axis of figure \ref{fig:biasvar}, imagine we pick a countable \emph{sequence} of different hypothesis classes along the axis. Formally, let $(\mathcal{H}_n)_{n \in \mathbb{N}}$ be such a sequence, where each $\mathcal{H}_n$ is a VC class. I will refer to the union $\mathcal{H} = \cup_{n \in \mathbb{N}} \mathcal{H}_n$ as the \emph{super}class, and to the individual $\mathcal{H}_n$ as the \emph{sub}classes.

By the fundamental theorem, each subclass $\mathcal{H}_n$  has the uniform convergence property, with a corresponding accuracy function $\epsilon_n^\mathrm{uc}$. Additionally, we define a \emph{weight function} $w: \mathbb{N} \rightarrow [0,1]$ with $\sum_n w(n) \leq 1$, assigning each subclass $\mathcal{H}_n$ a numerical weight $w(n)$ in the unit interval.

For instance, we could choose the sequence $(\mathcal{H}^\mathrm{pol}_n)_{n \in \mathbb{N}}$ of all the subclasses of polynomial separators of degree $n$. Each  $\mathcal{H}^\mathrm{pol}_n$ is a VC class; and the superclass $\cup_{n \in \mathbb{N}}  \mathcal{H}^\mathrm{pol}_n$ is the class $\mathcal{H}^\mathrm{pol}$ of all polynomials. For the weights, we could, for instance, pick the function $w: n \mapsto 2^{-n}$, which satisfies the property that the sum of all weights does not exceed 1.

Now one can show the following.\footnote{This 
	is theorem 7.4 in \citep{ShaBen14}. 
	}

\begin{thrm}[Generalized uniform convergence]\label{thrm:genuniconv}
Given hypothesis subclass sequence $(\mathcal{H}_n)_{n \in \mathbb{N}}$ such that each $\mathcal{H}_n$ has the uniform convergence property with accuracy function $\epsilon_n^\mathrm{uc}$, and weight function $w$. Then for all $\delta \in (0,1)$ and all $\mathcal{D}$ we have with probability at least $1-\delta$
\begin{align}\label{eq:wysiwyggen1}
(\forall n \in \mathbb{N})(\forall h \in \mathcal{H}_n) \left[ | L_\mathcal{D}(h) - L_S(h) | \leq \epsilon_n^\mathrm{uc}(m,w(n)\delta) \right],
\end{align}
and so in particular, for $\mathcal{H}=\cup_{n \in \mathbb{N}}\mathcal{H}_n$,
\begin{align}\label{eq:wysiwyggen2}
(\forall h \in \mathcal{H}) \left[ L_\mathcal{D}(h) \leq L_S(h) + \min_{n: h \in \mathcal{H}_n} \epsilon_{n}^\mathrm{uc}(m,w(n)\delta) \right].
\end{align}
\end{thrm}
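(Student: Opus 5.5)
The plan is a union bound over the subclasses, with the failure probability $\delta$ split among them according to the weights $w(n)$.

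First, fix $\delta \in (0,1)$ and $\mathcal{D}$. For each $n$ with $w(n)>0$, apply the uniform convergence property of $\mathcal{H}_n$ with confidence parameter $\delta_n := w(n)\delta \in (0,1)$. By the definition of the accuracy function $\epsilon_n^\mathrm{uc}(m,\cdot)$, a sample of size $m$ satisfies
\begin{align*}
(\forall h \in \mathcal{H}_n)\left[ |L_\mathcal{D}(h) - L_S(h)| \leq \epsilon_n^\mathrm{uc}(m,w(n)\delta)\right]
\end{align*}
with probability at least $1-w(n)\delta$. Call the complementary event $F_n$, so that $\mathbb{P}_{S\sim\mathcal{D}^m}[F_n] \leq w(n)\delta$. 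Subclasses with $w(n)=0$ need a convention: either exclude them, or read $\epsilon_n^\mathrm{uc}(m,0)$ as the trivial bound $1$, which holds surely. Either way they contribute nothing to the failure probability.

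Second, the event that \eqref{eq:wysiwyggen1} fails is $\bigcup_n F_n$. This is a countable union, so countable subadditivity gives
\begin{align*}
\mathbb{P}\Big[\bigcup_n F_n\Big] \leq \sum_n w(n)\delta \leq \delta,
\end{align*}
using $\sum_n w(n)\leq 1$. Hence \eqref{eq:wysiwyggen1} holds with probability at least $1-\delta$.

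Third, \eqref{eq:wysiwyggen2} follows on the same event. For any $h\in\mathcal{H}$ and every $n$ with $h\in\mathcal{H}_n$, \eqref{eq:wysiwyggen1} gives $L_\mathcal{D}(h)\leq L_S(h)+\epsilon_n^\mathrm{uc}(m,w(n)\delta)$. Taking the infimum over such $n$ gives the stated bound.

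The main obstacle is a technical subtlety in the first step rather than the union bound itself. The accuracy function $\epsilon_\mathcal{H}(m,\delta)$ was defined as a minimum of a set that might be empty for small $m$, or might have no minimum. I would handle this by reading the bound as trivial when no such $\epsilon<1$ exists, since $|L_\mathcal{D}-L_S|\leq 1$ always. For the minimum itself, I would pass to an infimum, relying on the fact that the uniform convergence event is monotone in $\epsilon$ and closed under decreasing limits (continuity of probability from above). Should the minimum over $n$ in \eqref{eq:wysiwyggen2} fail to be attained, the same reading as an infimum applies. Measurability of the events $F_n$ is implicitly assumed, as in the definition of uniform convergence.
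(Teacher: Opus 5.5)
Your proof is correct and is essentially the argument the paper relies on. The paper gives no proof of its own but cites Theorem 7.4 of \citet{ShaBen14}, which is exactly this union bound with $\delta_n = w(n)\delta$ and $\sum_n \delta_n \leq \delta$. Your extra handling of $w(n)=0$ and of non-attained minima (via continuity from above) is sound, and is care the cited source simply glosses over.
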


Like in the original uniform convergence result, this wysiwyg bound uniformly holds for all $h \in \mathcal{H}$ simultaneously, but in this case with an accuracy that does depend on the hypothesis subclass(es) that $h$ is in. It depends on $h$, first, because of the accuracy function $\epsilon^\mathrm{uc}_n$ associated with a subclass $\mathcal{H}_n$ that contains $h$. Second, it depends on $h$ because of the factor $w(n)$ that is applied to the confidence parameter $\delta$ in the accuracy function.

\subsubsection{Structural risk minimization} 
Analogously to how we defined (sect.\ \ref{sssec:ermdef}) the ERM rule as minimizing the uniform convergence wysiwyg bound \eqref{eq:wysiwyg3}, we define the SRM rule \citep{VapChe74} as minimizing  wysiwyg bound \eqref{eq:wysiwyggen2}.

\begin{defn}[SRM]
Given hypothesis class sequence $(\mathcal{H}_n)_{n \in \mathbb{N}}$ (such that each $\mathcal{H}_n$ has the uniform convergence property with accuracy function $\epsilon_n^\textrm{uc}$) and weight function $w$. Let $\mathcal{H}=\cup_{n \in \mathbb{N}}\mathcal{H}_n$. The SRM rule for $(\mathcal{H}_n)_n$ and $w$ is defined by
\begin{align}\label{eq:minsrm}
\mathrm{SRM}_{(\mathcal{H}_n)_n}^w(S) \in \argmin_{h \in \mathcal{H}} \left[ L_S(h) + \min_{n: h \in \mathcal{H}_n} \epsilon_n^\textrm{uc}(m,w(n)\delta) \right].
\end{align}
\end{defn}

\subsection{Regularization}\label{ssec:srmreg}
The SRM rule minimizes, not just the empirical error, but the sum of the empirical error and an additional term. This additional term depends on the hypothesis $h$, or more precisely, on the earliest hypothesis subclass $\mathcal{H}_n$ in the sequence that the hypothesis is in. 

\subsubsection{The regularization term}
SRM can thus be seen to strike a balance between empirical error and an additional \emph{penalty} or (in machine learning terminology) \emph{regularization} term for $\mathcal{H}_n$. This regularization term
\begin{align}\label{eq:regterm}
\min_{n: h \in \mathcal{H}_n} \epsilon_n^\textrm{uc}(m,w(n)\delta)
\end{align}
is, by \eqref{eq:wysiwygerrorquant1} and \eqref{eq:wysiwygerrorquant2}, up to a multiplicative constant equal to\footnote{In 
	order to actually implement the SRM rule \eqref{eq:minsrm}, we would of course need to settle on exact values for the penalty term. This is further complicated by the fact that we often only have loose upper bounds for the VC dimension \citep[p.\ 239]{HasTibFri09}. Similarly to the case of ERM, I here abstract away from implementation issues.}
	\begin{align}\label{eq:regtermexpl}
\sqrt{\frac{\mathrm{VCdim}(\mathcal{H}_n)-\log w(n) - \log \delta}{m}}.
\end{align}

The regularization term is larger with higher VC dimension of $\mathcal{H}_n$ and with lower weight $w(n)$. If we interpret the regularization as a penalty for $\mathcal{H}_n$'s \emph{complexity}, then SRM can be seen to strike a balance between empirical error and simplicity. The rule automatically navigates the bias-complexity trade-off by balancing empirical error (as a proxy for approximation error) with simplicity (as a proxy for estimation error), finding ``the best trade-off between the approximation error and a distribution-free upper bound on the estimation error'' \citep{BarBouLug02ml}.

\subsubsection{Model selection}
In the two-step picture of model selection (sect.\ \ref{sssec:curvefit} above), SRM can thus be seen to automatize both steps: it selects (step 1) a particular ``model'' (subclass in the sequence) and (step 2) the lowest-empirical-error hypothesis in this model (subclass).\footnote{Of
	 course, it is still up to us to choose the subclass sequence (sect.\ \ref{ssec:srmmodelrel} below).} 			This makes SRM structurally similar to  methods like AIC and BIC, which likewise select a model by minimizing penalized fit.\footnote{\label{fn:holdout}We 
	do not have to automatize model selection to this extent. An alternative is to, first, run ERM$_{\mathcal{H}_n}$ for each of the subclasses $\mathcal{H}_n$ (realistically, a finite number $N$) on the same training data set, resulting in a set $\mathcal{H}_\mathrm{val}$ of $N$ selected hypotheses. Second, we run ERM$_{\mathcal{H}_\mathrm{val}}$ on a separate \emph{validation} data set to select a final hypothesis. The latter step is to catch overfitting, and so the whole procedure is again meant to manage the bias-complexity trade-off. From this perspective, SRM can be seen to ``approximate the validation step [\dots] automatically'' (\citealp[p.\ 223]{HasTibFri09}; also see \citealp[ch.\ 11]{ShaBen14}). In practice, especially if data is plenty, the more manual validation procedure 
	might be preferable to implementing regularization in the learning algorithm, like in SRM.   }  

\subsubsection{The  norm}
If we interpret the regularization penalty as a complexity penalty, then the SRM rule implements a methodological simplicity norm.
\begin{quotation}
\textbf{Methodological norm \customlabel{norm:m2}{N2}(Occam's razor).} Trade fit for simplicity.
\end{quotation}

But what exactly is the justification for this norm? Why indeed is the specific trade-off that SRM implements ``the right'' \citep{ShaBen14} or ``the best''  \citep{BarBouLug02ml}, and is the regularization term really best understood as a complexity penalty? In order to address these questions satisfactorily, we have to take a closer look at the theoretical justification for the SRM rule.



\section{The theoretical justification for SRM}\label{sec:srmjust}
I first discuss the best theoretical justification for SRM (section \ref{ssec:srmthjusts}). I then highlight an apparent obstacle to a subsequent argument for simplicity norm \eqref{norm:m2}, namely the model-relativity of this theoretical justification (section \ref{ssec:srmmodelrel}).

\subsection{Theoretical justifications}\label{ssec:srmthjusts}
Since the worry about simplicity norm \eqref{norm:m1} and its theoretical justification was its one-sided concern with estimation error, a good starting point is to see whether in the case of SRM more can be said about the approximation error.

\subsubsection{Universal consistency}\label{sssec:just:univcons}
The reliability guarantee of PAC learnability  (sect.\ \ref{sssec:pac}) entails that, as we draw more and more training data, the estimation errors of the selected classifiers converge (in probability) to 0.  What if we aim for the \emph{true risks} (so  estimation \emph{and} approximation errors) to converge (in probability) to 0?

As discussed by \citet[sect.\ 7]{LuxSch11incl}, this can be achieved in two steps.\footnote{\citet{LuxSch11incl} do not mention SRM, but as explained this rule is a way of implementing what they describe. Also see \citep[ch.\ 18]{DevGyoLug96}, which explicitly deals with SRM and which von Luxburg and Sch\"olkopf base part of their discussion on.} First, we choose a sequence of (nested) subclasses $(\mathcal{H}_n)_{n \in \mathbb{N}}$ such that each possible hypothesis is eventually contained in (or at least arbitrarily closely approximated by) some $\mathcal{H}_n$. 
Second, we have to devise a learning procedure that only has access to an initial subsequence of subclasses which grows at the right pace with the sample size, namely in such a way that both estimation and approximation error converge to 0. The SRM rule gives such a method that is \emph{Bayes-consistent} \citep[thrm.\ 18.1]{DevGyoLug96}.

Formally, following again \citet{ShaBen14}, we can understand consistency as a learnability notion strictly weaker than PAC learnability, because the sample complexity function depends on more elements. We say that algorithm $A$ is \emph{universally consistent} w.r.t.\ hypothesis class $\mathcal{H}$ 
if there is a sample complexity function $m^\mathrm{con}_\mathcal{H}: (0,1)^2 \times \mathcal{H} \times \mathcal{P} \rightarrow \mathbb{N}$ such that for every $\epsilon,\delta \in (0,1)$, every $h \in \mathcal{H}$ and every $\mathcal{D}$, 
we have for $m \geq 	m^\mathrm{con}_\mathcal{H}(\epsilon,\delta,h,\mathcal{D})$ that 
\begin{align}\label{eq:consbound}
\mathbb{P}_{S \sim \mathcal{D}^m}\left[ L_\mathcal{D}(A(S)) \leq  L_\mathcal{D}(h) + \epsilon \right] \geq 1-\delta.
\end{align}	
We further say 
(now following again \citealp[p.\ 660]{LuxSch11incl}) that $A$ is \emph{universally Bayes-consistent} if $\mathcal{H}$ is the class of \emph{all} hypotheses.

This might look like an impressive property, but \citet{ShaBen14} disagree. The reason is that we have no control over the speed of convergence. This can be seen from the dependence of the sample complexity function on (aside from a reference $h$) the true distribution, which we do not normally know (cf.\ \citealp[sect.\ 7.4]{LuxSch11incl}). Indeed, on a closer look, consistency is so weak that it is satisfied by learning algorithms which do not look reasonable at all. \citet[pp.\ 15f]{ShaBen14} introduce a ``bad learner'' which simply memorizes the training data and gives constant prediction 0 for unseen instances, but which is universally Bayes-consistent for a countable domain $\mathcal{X}$:\footnote{The
	countability of the domain is important here: \citet[sect.\ 4.2]{LuxSch11incl} give the same  \texttt{Memorize} algorithm (note: an instantiation of the ERM rule) for the domain $[0,1]$ as an example of how ERM can \emph{fail} to be consistent.} 
\begin{quote}
Intuitively, it is not obvious that the \texttt{Memorize} algorithm should be viewed as a \emph{learner}, since it lacks the aspect of generalization [\dots]. The fact that \texttt{Memorize} is a consistent algorithm [\dots]\ therefore raises doubt about the usefulness of consistency guarantees. (ibid., p.\ 67)
\end{quote}

But we do not here need to settle the question whether consistency can still count as a minimal kind of justification for learning algorithms,\footnote{Consistency
	could serve as a ``sanity check'' (cf.\ \citealp[sect.\ 17.1.1]{Gru07}), or a necessary but not sufficient condition for justification. This may be consistent with Shalev-Shwartz and Ben-David's view that ``[s]ince it is easy to make any algorithm consistent, it may not be wise to prefer one algorithm over the other just because of consistency considerations'' \citeyearpar[p.\ 69]{ShaBen14}.} 
because we can say something stronger about SRM.\footnote{In discussing the reliability of SRM, \citet[sects.\ 3.3--3.4]{HarKul07} only mention the property of universal consistency. This is criticized by \citet{KelMay08ndprinc}, who also highlight the kind of justification I come to in section \ref{sssec:just:srmwysiwyg} below. }

\subsubsection{Nonuniform learnability}\label{sssec:just:nonunif} 
\citet[ch.\ 7]{ShaBen14} introduce SRM in the context of a notion of learnability strictly between PAC learnability and consistency. The notion of \emph{nonuniform learnability} involves a sample complexity function which (unlike consistency) does not depend on the unknown true distribution, but which (unlike PAC learnability) does depend on a reference hypothesis $h$. Thus we say that $A$ nonuniformly learns hypothesis class $\mathcal{H}$ if there is a sample complexity function $m^\mathrm{nul}_\mathcal{H}: (0,1)^2 \times \mathcal{H} \rightarrow \mathbb{N}$ such that for every $\epsilon,\delta \in (0,1)$ and every $h \in \mathcal{H}$, 
we have for $m \geq 	m^\mathrm{nul}_\mathcal{H}(\epsilon,\delta,h)$ 	the property \eqref{eq:consbound}.\footnote{The notion of nonuniform learnability appears to have been introduced by \citet{BenIta88icalp,BenIta94jcss}. \citet{BEHW89jacm,LinManRiv91ic} discuss the same notion in the PAC setting.}

\citet[thrm.\ 7.2]{ShaBen14} state a generalization of the fundamental theorem, which characterizes nonuniform learnability. Namely, a hypothesis class $\mathcal{H}$ is nonuniformly learnable if and only if it is a countable union $\cup_n \mathcal{H}_n$ of VC classes, and indeed if and only if it is nonuniformly learnable by SRM.\footnote{It 
	is relatively straightforward to show that any nonuniformly learnable class can be decomposed into a countable union  of individually PAC learnable classes (\citealp[p.\ 60]{ShaBen14}; where they 		refer in the final step to the fundamental theorem, what is more precisely required is their 			corollary 6.4). The other direction follows by showing that a countable union of VC classes is 		nonuniformly learnable by SRM (ibid., thrms.\ 7.3, 7.4).}\textsuperscript{,}\footnote{	This  
	implies that nonuniform learnability is a strict relaxation of PAC learnability (there are nonuniformly learnable but not PAC learnable classes, namely any countable union of VC classes which does not itself have finite VC dimension; ibid., exmpl.\ 7.1), and also that it is a strict strengthening of consistency (we saw that ERM is universally consistent on countable domains, but on infinite domains the class of all hypotheses is not nonuniformly learnable; sect.\ \ref{sssec:nfl} below).} 
	Further, theorem \ref{thrm:ermfailnonunif} in the appendix shows that if the superclass does not  already have finite VC dimension, one really does need a different method than ERM for the nonuniform learnability guarantee: while, as we saw, ERM can be consistent on non-VC hypotheses classes, it cannot non-uniformly learn any such class.
	
%

However, it is again not clear whether nonuniform learnability is actually a useful notion of learnability. 
 In the words of Shalev-Shwartz and Ben-David themselves,
\begin{quote}
When approaching a learning problem, a natural question is how many [data instances] we need to collect in order to learn it. Here, PAC learning gives a crisp answer. However, for both nonuniform learning and consistency, we do not know in advance how many [instances] are required to learn $\mathcal{H}$. In nonuniform learning this number depends on the best hypothesis in $\mathcal{H}$, and in consistency it also depends on the underlying distribution. In this sense, PAC learning is the only useful definition of learnability. \citeyearpar[p.\ 67]{ShaBen14}
\end{quote}

Having such a bound on sample complexity, though, is only one of three possible uses  of the theoretical analysis which the authors put forward (ibid., sect.\ 7.5). The first  is to have ``an upper bound on the true risk of the learned hypothesis.''\footnote{The third in their list is to have ``a crisp way to encode prior knowledge'' \citep[p.\ 68]{ShaBen14}. Here they also discuss SRM for model selection, writing that ``the SRM rule enables us to select the right model on the basis of the data itself'' (ibid.). But, again, it is not  fully clear why SRM does this in the ``right'' way; and I suggest that the answer lies in the bounds to be discussed next. I will turn to the role of prior knowledge in SRM in section \ref{ssec:srmmodelrel}.} This points at a kind of justification that is closest to the original derivation of SRM.


\subsubsection{Wysiwyg and oracle bounds}\label{sssec:just:srmwysiwyg}
Here we switch again to the perspective where we draw a training sample of a specific size $m$, and we are interested in a bound on the true risk of the hypothesis that will be selected. We now reason in a way that is analogous to the justification for ERM (sect.\ \ref{sssec:ermjust1}), albeit with some differences.\footnote{Note
	that having these bounds is not a stronger property than nonuniform learnability in the way that nonuniform learnability is a stronger property than consistency. This is analogous to the wysiwyg and reliability bounds for ERM, which is not a stronger property than learnability; for both it is necessary and sufficient to have a VC hypothesis class. In the current case the necessary and sufficient condition for both is  a countable sequence of VC classes. The value of these finite-sample bounds is rather that they are more clearly methodologically useful, or so I will argue in more detail in the upcoming sections.}

First, from the generalized uniform convergence bound \eqref{eq:wysiwyggen1} or \eqref{eq:wysiwyggen2} for hypothesis class sequence $(\mathcal{H}_n)_n$ and weight function $w$, applied to the particular hypothesis selected by SRM$_{(\mathcal{H}_n)_n}^w$, we have (for size-$m$ sample generated from any distribution, with probability at least $1-\delta$)
\begin{align}\label{eq:srmwysiwyg}
 L_\mathcal{D}(\mathrm{SRM}_{(\mathcal{H}_n)_n}^w(S)) \leq L_S(\mathrm{SRM}_{(\mathcal{H}_n)_n}^w(S)) +  \epsilon_{\hat{n}}^\mathrm{uc}(m,w(\hat{n})\delta).
\end{align}

Here $\hat{n}$ is the index of the hypothesis class $\mathcal{H}_{\hat{n}}$ that SRM selected the hypothesis from. This is again a wysiwyg bound, with the difference that the error term now depends on the hypothesis (or rather, on the class $\mathcal{H}_{\hat{n}}$ it was selected from).  

Furthermore, by the definition of SRM as a minimizer and another application of the generalized uniform convergence bound \eqref{eq:wysiwyggen1},
\begin{align}
L_S(\mathrm{SRM}_{(\mathcal{H}_n)_n}^w(S)) + \epsilon_{\hat{n}}^\mathrm{uc}(m,w(\hat{n})\delta) &\leq \min_{n, h \in \mathcal{H}_n} \left[ L_S(h) +  \epsilon_{n}^\mathrm{uc}(m,w(n)\delta) \right] \\
&\leq \min_{n, h \in \mathcal{H}_n} \left[ L_\mathcal{D}(h) + 2\epsilon_{n}^\mathrm{uc}(m,w(n)\delta)\right] \\
&\leq \min_{h \in \mathcal{H}}  L_\mathcal{D}(h) + 2\epsilon_{n_\mathcal{D}^*}^\mathrm{uc}(m,w(n_\mathcal{D}^*)\delta).
\end{align}

Here $n_\mathcal{D}^*$ is the index of the earliest class containing the best (lowest true risk) hypothesis in the superclass, which depends on the true $\mathcal{D}$. In sum, we have
\begin{align}\label{eq:srmoracle}
 L_\mathcal{D}(\mathrm{SRM}_{(\mathcal{H}_n)_n}^w(S)) \leq \min_{h \in \mathcal{H}} L_\mathcal{D}(h) + 2\epsilon_{n_\mathcal{D}^*}^\mathrm{uc}(m,w(n_\mathcal{D}^*)\delta).
\end{align}

This is again a reliability bound. It gives a justification for using SRM, as selecting (with high probability, up to an error term)  the best in the superclass. However, the difference is that the error term now depends on the subclass which contains the best hypothesis. Such a bound is also called an \emph{oracle bound}, because we do not actually know what this class and hence this error term is.\footnote{More generally, we can derive such a bound for any reference subclass (``oracle''), without knowing how good this class actually is. See \citep[exrcs.\ 7.4]{ShaBen14}.}

This reliability bound, not mere consistency, is supposed to show the ``real strength'' of SRM \citep[p.\ 294]{DevGyoLug96}, and why the implemented trade-off is the right or ``optimal'' one (ibid., p.\ 295). However, this bound less clearly constitutes an interesting reliability property than the analogous bound for ERM, because of this further ``oracle'' dependence on what the best class turns out to be. In order to see how exactly this bound can underwrite a methodological justification for SRM, we need to take a step back again, and confront 
the general issue of the relativity of theoretical justification to the prior choice of inductive model.




\subsection{Model-relativity}\label{ssec:srmmodelrel}
The ERM rule is a generic learning rule, which on each application asks for a particular inductive model, a particular VC  class. The fundamental theorem gives a theoretical justification for the ERM rule, but, again (sect.\ \ref{sssec:norm1constrs} above), this justification is model-relative. And the inductive model must be restrictive: it must be a hypothesis class that is sufficiently simple.

Something similar holds for SRM, be it for a more general inductive model and a weaker theoretical guarantee. As \citet[p.\ 68]{ShaBen14} write, we now ``encode our prior knowledge by specifying weights over (subsets of) hypotheses of $\mathcal{H}$'' and then ``we again have a generic learning rule -- SRM.'' The theoretical justification for SRM is again model-relative, relative to this more general, but still restrictive type of inductive model. 

\subsubsection{No free lunch}\label{sssec:nfl}
The no-free-lunch theorem of \citet[thrm.\ 5.1]{ShaBen14} shows that (for an infinite domain $\mathcal{X}$) the class of \emph{all} hypotheses is not PAC learnable (ibid., cor.\ 5.2).\footnote{See \citep{SteGru21syn} for further discussion of the no-free-lunch theorems.} This is a consequence of the fundamental theorem, since the class of all hypotheses does not have finite VC dimension.\footnote{In the presentation of \citet{ShaBen14}, their no-free-lunch theorem forms part of the proof of the fundamental theorem.} 
In the case of nonuniform learning, we have an analogous no-free-lunch result \citep[p.\ 63]{ShaBen14}. Namely, for infinite domain, the class of all hypotheses is not nonuniformly learnable. This is a consequence of the fact that the class of all hypotheses is not a countable union of VC classes (ibid., exrcs.\ 7.5).

The upshot is that there is no ``universal learner'' on either of the two definitions of learnability.\footnote{There 
	\emph{are} universal learners in the sense of universal consistency: an example is the $k$-nearest neighbor classifier \citep[sect.\ 3]{LuxSch11incl}. However, the rate of convergence can be arbitrarily slow: performance up to any finite sample size can be arbitrarily bad (ibid., p.\ 695; \citealp[ch.\ 7]{DevGyoLug96}). This again illustrates the weakness of consistency guarantees.} 
	Both ERM and SRM must be equipped with a restrictive inductive model, which represents a restrictive inductive bias; and the respective learnability guarantees are relative to this choice. In the case of ERM, the inductive model must be a VC class, and learnability gives a model-relative justification of probably finding the near-best in the class. In the case of SRM, both the inductive model and the kind of justification are somewhat more intricate. The inductive model is a (weighted) countable sequence of VC classes, which can be seen to involve both a choice of superclass $\mathcal{H}$ and a choice of how to carve up this class in (and assign weights to) subclasses $\mathcal{H}_n$. The oracle bound guarantee is not just relative to the superclass, but also to this choice of (weighted) subclasses.

Moreover, a choice of (weighted) sequence  is automatically a choice of simplicity ordering, because it determines the VC dimensions (and weights) that appear in the regularization term.

\subsubsection{The subclass sequence}\label{sssec:modelrelclasses}
The model-relativity to the choice of hypothesis class sequence is the topic of a recent paper by \citet{BarCevGne22mam}. The authors compare, for two disjoint subclasses $\mathcal{H}_S$ and $\mathcal{H}_C$ (one a lower-VC dimension ``simple'' class, the other a higher-VC dimension ``complex'' class), the ERM rule on the union $\mathcal{H}_S \cup \mathcal{H}_C$ (which is still a VC class) to the SRM rule on the sequence $(\mathcal{H}_S, \mathcal{H}_C)$ with uniform weights. They then prove upper bounds on the sample size for a probabilistic guarantee of selecting a hypothesis from the ``correct'' class, where correctness derives from one of two possible scenarios: either the data is actually sampled from an element plus noise in the first class (a ``simple world'') or from an element plus noise in the second (``complex world''). This bound is sharper for SRM in the simple world than for ERM, but sharper for ERM than for SRM in the complex world.

	
While it may be an overinterpretation that SRM in the latter case ``provably slows down, instead of favoring, the supervised learning process'' \citep[p.\ 23]{BarCevGne22mam},\footnote{Worst-case 
	upper bounds do not yet determine relative convergence behavior in any particular instance. The derivation of lower bounds would be complicated by the fact that the authors' assumptions of either a ``simple'' or ``complex world'' entail corresponding restrictions on the possible true distributions.} 
	the general lesson is correct that theoretical bounds depend on whether and how we carve up the hypothesis superclass, and this choice may be a better or worse match with the actual learning situation. As a simple illustration inspired by Bargagli Stoffi et al., consider an instance space of two features, and the superclass $\mathcal{H} =\mathcal{H}^\mathrm{pol}_2$ of quadratic separators. 

This class has VC dimension 5, so the error term for ERM$_\mathcal{H}$ in the bounds \eqref{eq:ermwysiwyg} and \eqref{eq:ermlearn} is of the order 
\begin{align}\label{eq:ermvsrm1}
\sqrt{\frac{5 -\log \delta}{m}}.
\end{align} 

But we could also choose to use \textsc{SRM} on a uniformly weighted nested decomposition of $\mathcal{H}$ into $\mathcal{H}_1$ of linear separators and $\mathcal{H}_2 = \mathcal{H}$ of quadratic separators. Since VCdim$(\mathcal{H}_1)=4$, the error terms in the bounds \eqref{eq:srmwysiwyg} and \eqref{eq:srmoracle} for SRM$_{(\mathcal{H}_1,\mathcal{H}_2)}^{(.5,.5)}$ are now of the order
\begin{align}\label{eq:ermvsrm2}
\sqrt{\frac{4 -\log \delta+\log 2}{m}} \textrm{ \ and \ } \sqrt{\frac{5 - \log \delta+\log 2}{m}}
\end{align} 
for $\mathcal{H}_1$ and $\mathcal{H}_2$, respectively. 

This means that we have a stronger wysiwyg guarantee for SRM than for ERM for those cases in which the former selects a hypothesis from $\mathcal{H}_1$; but a weaker one for those cases in which SRM selects a hypothesis from $\mathcal{H} \setminus \mathcal{H}_1$. Moreover, we have a stronger reliability guarantee for SRM than the guarantee for ERM in case $\mathcal{H}_1$ contains the best hypothesis in $\mathcal{H}$; but we have a weaker reliability guarantee if it does not. What cases apply, however, is presumed to be unknown in the learning problem: this depends on the true distribution. Thus, depending on the unknown true distribution, it may or may not in fact  be optimal (in the sense of having the strongest above bounds) to implement a (simplicity) preference for $\mathcal{H}_1$ via SRM. 

\subsubsection{The subfamily problem}
The theory obviously cannot tell us what is the best choice in the previous scenario. Similarly, once we have chosen a non-VC superclass (like the class of all polynomials), the theory cannot tell us what is the best way of carving it up in a sequence of subclasses. In practice, we might have intuitions about the most natural way of doing this (like the carving up of the polynomials by degree), and the theoretical guarantees may even provide the basis for an \emph{explanation} why SRM with this standard choice generally works well. But the theory cannot give a \emph{justification} for this particular choice, because the theoretical guarantees are model-relative; and any other choice of carving up translates to corresponding such guarantees (cf.\ \citealp[pp.\ 11f]{Ste25mam}; \citealp[p.\ 358]{For95bjps}).

The analogous point for AIC is made by \citet[p.\ 40]{Kie97bjps}, in the context of a related problem which has received some attention in the philosophical literature. Translated to the current framework, this ``subfamily problem,'' raised by \citet[sect.\ 6; also see \citealp{Kuk95bjps,For95bjps}]{ForSob94bjps}, 
 is why the functioning of SRM, interpreted as a methodological advice  how to trade off simplicity with fit, is not empty. Namely, \emph{after} we have seen the data $S$, we could specify any (say, high-degree polynomial) hypothesis $\hat{h}_S$ with perfect fit on this data, and imagine a hypothesis class sequence which includes the singleton class $\mathcal{H}_i=\{ \hat{h}_S \}$ with only this hypothesis. Now the SRM rule with this inductive model would have selected this hypothesis, and without any trade-offs. How does this rhyme with the methodological advice to trade off fit with simplicity, let alone with any more specific advice such as to prefer, for equal fit, lower to higher-degree polynomials?

To think there is a problem here reveals a misunderstanding of frequentist guarantees. Or more charitably, it brings out that frequentist guarantees can be quite counter-intuitive, or even of restricted use (cf.\ \citealp[p.\ 14]{Ste25mam}). Frequentist guarantees are \emph{pre-sampling} bounds, relative to a \emph{pre-sampling} choice of inductive model, on the probabilities of outcomes. To try to use such bounds in reasoning about a specific outcome, as in the previous, easily runs into confusions. We could, of course, define as our inductive model a hypothesis class sequence which includes the singleton $\{ \hat{h}_S \}$ fitting perfectly the specific data $S$; and this indeed \emph{would have} led the corresponding SRM procedure to select this hypothesis, in the \emph{hindsight} of this specific outcome $S$. But \emph{pre-sampling} there is no probabilistic guarantee of this specific outcome (unless we explicitly assume so in the true distribution), so no theoretical guarantee of perfect fit that might be used to justify the selection of this particular hypothesis. 

The general point about inferring methodological advice directly from the functioning of SRM is the following. If such advice is inferred from pre-sampling and model-relative guarantees, then it must be  pre-sampling and model-relative advice, too: given your choice of inductive model, it is, with high pre-sampling probability, a good idea to trade off fit with a penalty as given by the inductive model. That also means that we cannot infer from these guarantees any ``absolute'' methodological advice to the effect that, say, one should always for equal fit prefer linear to quadratic hypotheses, because whether SRM implements such a preference depends on the choice of inductive model, and the theory does not tell us what is the right such choice. This holds for the choice of superclass and subclasses, which define the VC dimensions in the penalty terms, but also for the weights of the subclasses.


\subsubsection{The weight function}\label{sssec:modelrelweights}
\citet[sect.\ 7.3]{ShaBen14} describe a version of the \textsc{SRM} rule where \emph{only} the weights play a role. In their ``Minimum Description Length'' (MDL) rule, the given countable hypothesis class $\mathcal{H}$ is decomposed in all its singletons, $\mathcal{H} = \cup_n \{ h_n \}$. Since each singleton class has the same VC dimension 1, the VCdim term can simply be crossed out from the regularization \eqref{eq:regtermexpl}, and all that matters are the weights.\footnote{This
	rule is only loosely related to (model selection) methods treated in standard references on the MDL principe \citep{Gru07,GruRoo20ijmi}, and rather an instance of the  PAC-Bayes approach as presented in (\citealp[sect.\ 17.10.2]{Gru07}; \citealp[sect.\ 6.2]{LuxSch11incl}). In other presentations, the PAC-Bayes label primarily means randomnized prediction via the prior/posterior or weight function, which is updated in accordance with minimizing corresponding bounds (\citealp[ch.\ 31]{ShaBen14}; \citealp{Alq24ftml}).}

The weight function now effectively assigns weights to the single hypotheses directly, and the  ``\textsc{MDL}'' label stems from the  suggestion \citep[p.\ 64]{ShaBen14} to assign hypothesis $h$ a weight that depends on $h$'s \emph{description length} via some description language.\footnote{A 
	weight function corresponds to a prefix-free description language, where higher weights correspond to shorter descriptions \citep[sect.\ 7.3]{ShaBen14}.}
	They write that ``the MDL paradigm gives a formal justification to a philosophical principle of induction called Occam's razor'' (ibid., p.\ 58), because 
	``the more complex a hypothesis $h$ is (in the sense of having a longer description), the larger the sample size it has to fit to guarantee that it has small true risk'' (ibid., p.\ 65). 
	
This reasoning falters on the fact that, due to the dependence on choice of description language, description length is not a robust notion of the simplicity of individual hypotheses;\footnote{Recall section \ref{sssec:simpl}, and also see \citep[sect.\ 3.1]{Ste25mam}.} 
	and the authors immediately correct themselves \citep[pp.\ 65f]{ShaBen14}. In line with the point of the previous section, they rather conclude that (ibid., p.\ 66)
\begin{quote}
there is no inherent generalizability difference between hypotheses. The crucial aspect here is the dependency order between the initial choice of language (or, preference over hypotheses) and the training set. [\dots]\ As long as [this choice of inductive model] is done independently of the training sample, our generalization bound holds.
\end{quote}

We may add that the  bound is model-relative, relative to this initial choice of inductive model. Much like before, the wysiwyg and oracle bounds will be stronger for those hypotheses which have received larger weights, and so will be better in those learning situations in which these preferred hypotheses are in fact good.

Having now discussed the model-relatively in some detail, we must finally confront an obvious question. If the theoretical guarantees are relative to our inductive model, which includes the simplicity ordering, how could we get a non-circular justification for a simplicity preference out of them?


\subsubsection{Circularity?}\label{sssec:simpljustcirc}

Although \citet{BarCevGne22mam} shy away from concluding so explicitly,\footnote{The most Bargagli Stoffi et al.\ conclude, in answer to their ``central question---`is simplicity a road to the truth?{'}'' \citeyearpar[p.\ 21]{BarCevGne22mam}, is that ``the principle of Occam's razor, at least as expressed by introduction of regularization in SRM, can both favor and hamper learning and hence convergence to the truth'' (ibid., p.\ 23).} it is easy to read their work as showing that a simplicity preference is good if the truth is simple, and not good if it is not. Similarly, it seems that \citeauthor{ShaBen14}'s MDL-SRM is a good method if (some) high-weight (``simple'') hypotheses indeed have low true risk, but not so good if they do not.

But then it seems that simplicity here ultimately comes down to a particular domain assumption, which may or may not be appropriate. \citet[ch.\ 6]{Nor21} even argues that talk of simplicity in induction is merely (as \citealp[ch.\ 2]{Sob88} calls it) a ``surrogate'' for specific background facts. According to Norton's material theory \citeyearpar{Nor03pos,Nor21}, inductive inferences are warranted solely by background facts; and in his analysis of curve fitting (\citeyear{Nor21}, sect.\ 6.6), he identifies three such facts.

First, a particular ``error model'' (ibid., p.\ 196), which in our framework presumably includes the assumption of a true data-generating distribution. Second, a parameterization or description of the hypotheses. Norton writes that ``[d]escriptive complexity can only be a good epistemic guide to the truth [\dots]\ if the language of description is chosen so that the truths correspond to simple assertions,'' which in curve fitting means ``a matching with background facts of the parametrization used and the family of its functions from which the curves are drawn'' (ibid., p.\ 199). Third,  an ``order hierarchy'' of families of curves, which ``has to be such that curves fitted earlier in the procedure correspond to stronger or more probable processes'' (ibid., p.\ 202). Thus the choice of ordered sequence of subclasses is only good for induction if it matches the actual domain.\footnote{Norton
	devotes a further chapter to statistical model selection and AIC \citeyearpar[ch.\ 7]{Nor21}. 
	His  discussion of AIC focuses on the material assumption that ``the data are generated by an hypothesis in the model under test'' (ibid., p.\ 205), which does not pertain to statistical learning theory.}

If this is so, then the simplicity preference exhibited by SRM reflects an \emph{ontological} commitment, rather than a \emph{methodological} principle (cf.\ \citealp[ch.\ 2]{Sob88}). It reflects an assumption that ``the world'' (or, in any particular application, the relevant domain) is simple in the corresponding sense, rather than a principle that a simplicity preference is good even without such an assumption. Consequently, if this is so, any justification for simplicity we may obtain from the theory must be question-begging or circular: we can only show a simplicity preference to be good if we first assume that the domain favors simplicity.

I will now argue that this is not so. Even if the theoretical guarantees are model-relative, we can still obtain a justification for a simplicity preference as a methodological principle.

\section{The methodological justification for simplicity}\label{sec:arg}

The original core argument for a simple hypothesis class is that \emph{in order to} profit from the relevant reliability and wysiwgy justification, we must choose a simple class. The main weakness of this means-ends argument and the corresponding methodological norm \eqref{norm:m1} is that it is too restrictive in its applicability.

The new means-ends argument is not so restrictive, because it  deals with a choice of possibly very complex hypothesis class: a countable union of VC classes. It says that \emph{in order to} at least profit from a weaker reliability and wysiwyg justification for such a class, we must carve it up in a sequence of subclasses and implement, as according to norm \eqref{norm:m2}, a certain preference for simplicity to fit in the learning.


\subsection{The argument}
In analogy to the core argument of section \ref{sssec:normn1} above, we can view methodological norm \eqref{norm:m2} as the conclusion of two premises.

\begin{enumerate}
\item In order to have the wysiwyg and reliability guarantees \eqref{eq:srmwysiwyg} and \eqref{eq:srmoracle}, one must trade simplicity for fit.
\item One should aim to have the wysiwyg and reliability guarantees \eqref{eq:srmwysiwyg} and \eqref{eq:srmoracle}.
\end{enumerate}

Notice that this similar argument structure does hide a number of disanalogies with the core argument.  Most importantly, where in the core argument premise 1 is essentially (one direction of) the fundamental theorem, in the current argument premise 1 is more nearly a statement about the working of SRM. 

The formal condition analogous to the choice of a VC class would here be the choice of a countable sequence of VC classes (sects.\ \ref{sssec:just:nonunif}--\ref{sssec:just:srmwysiwyg} above). 
	We could add this condition explicitly to the argument statement, where ``a complex hypothesis class'' is intended to stand for a countable sequence of VC classes.   


\begin{enumerate}
\item In order to have (for a complex hypothesis class) the wysiwyg and reliability guarantees \eqref{eq:srmwysiwyg} and \eqref{eq:srmoracle}, one must trade simplicity for fit.
\item One should aim (for a complex hypothesis class) to have the wysiwyg and reliability guarantees \eqref{eq:srmwysiwyg} and \eqref{eq:srmoracle}.
\end{enumerate}

A more accurate label might be ``possibly complex but not \emph{overly} complex,'' since formally the condition of a countable union of VC classes  includes the case of a very simple (single low-VC-dimension) class, whereas it excludes classes that are not even expressible as such countable unions anymore. But the latter is hardly a restriction (sect.\ \ref{sssec:just:univcons}), and the relevant context for norm \eqref{norm:m2} is where we disregard norm \eqref{norm:m1} because we seek to use a class that is too complex.  For that reason the chosen label makes sense in premise 2, and by extension in premise 1. 

The thrust of premise 1 is that in order to have (for such a complex class) the SRM guarantees, we must use SRM. Spelled out a little more: we must carve up the class in a countable sequence of VC classes, as a precondition to the generalized uniform convergence theorem, specify SRM in accordance with this sequence, and in executing it have it trade fit for simplicity. This may sound unobjectionable, but the formal results I have discussed do not actually \emph{fully} entail it.\footnote{Thanks to an anonymous reviewer for flagging this.} 

Nonuniform learnability implies nonuniform learnability by SRM, but that does not rule out nonuniform learnability by another method; and even for the specific SRM bounds \eqref{eq:srmwysiwyg} and \eqref{eq:srmoracle} I have no result to offer that conclusively rules out that another method could not satisfy alike bounds. However, theorem \ref{thrm:ermfailnonunif} does rule out that ERM, which does not trade fit for simplicity, satisfies these guarantees, and it is hard to imagine a method which similarly profits from the wysiwyg bound given by the generalized uniform convergence theorem (which SRM explicitly minimizes) but looks so different from SRM so as to not trade fit for simplicity.\footnote{Perhaps
	\emph{cross-validation} is such a method. Like the hold-out procedure (fn.\ \ref{fn:holdout} above), this is a common alternative to regularization. However, cross-validation is theoretically not so well-understood \citep[pp.\ 119f]{ShaBen14} and it is not clear that we have something like the guarantees \eqref{eq:srmwysiwyg} and \eqref{eq:srmoracle} for it. I will not pursue this further in the current paper, but if cross-validation can be shown to perform very similarly to SRM and its explicit regularization, this might underwrite a weaker statement of norm \eqref{norm:m2} that we have to proceed ``as if'' we were trading simplicity for fit. Thanks again to an anonymous referee for this suggestion. }
	Thus, while falling short of a watertight mathematical fact, this premise is ultimately a robust-looking observation about SRM and its guarantees. More controversial is again the second premise, stating that these are guarantees one should aim for.


In particular, in light of the model-relativity discussed previously, more needs to be said about why these guarantees are methodologically good, and so why the argument gives a \emph{methodological} justification (section \ref{ssec:argmethod}) for trading simplicity with fit. Relatedly, more needs to be said about why this is a methodological justification for a \emph{simplicity} preference (section \ref{ssec:argsimpl}). I will start with the latter.

\subsection{Simplicity?}\label{ssec:argsimpl}
Recall from section \ref{ssec:srmreg} that the simplicity interpretation stems from the occurrence of the hypothesis subclasses' capacities in the regularization.  

\subsubsection{The weight function}\label{sssec:weights}
However,  there is another variable in the regularization term: the weight of the subclass. In fact, we saw that in the MDL-SRM approach of section \ref{sssec:modelrelweights}, the capacities drop out and \emph{only} the weights matter. Does that mean that simplicity as capacity does not actually need to play a role?

Note, first, that we can also make the weights drop out. Namely, if we have a finite sequence of $(\mathcal{H}_n)_{n<N}$ of hypothesis subclasses, then we can simply use 
the uniform weight function $w: n \mapsto N^{-1}$. The weights can then still be seen to play a role in the generalized uniform convergence theorem: both the reliability and the wysiwyg guarantee feature the additional $-\log N^{-1} = \log N$ term. But since these terms are the same for all hypothesis subclasses (the penalty term rather expresses the size of the superclass), they do not play a role in the regularization.

In fact, for any finite sample size $m$, we will effectively always work with a finite sequence of subclasses. In the usual case of a (nested) sequence of subclasses of increasing VC dimension (like the polynomials), we have that subclasses beyond some $N$ will never be considered, whatever the size-$m$ data, because their penalty is larger than good fit could compensate for. So effectively we are working with a finite sequence again, for which we can imagine a uniform weight function, so that the weights again drop out of the regularization. This kind of reasoning might be why in many---most---presentations of SRM the weights do not appear at all.\footnote{More
	generally (not assuming anything about the VC dimensions), for an infinite sequence  of subclasses, for any (necessarily non-uniform) weight function over the full sequence, the weight terms $w(n)$ must go to zero as $n$ goes to infinity. That means that the $-\log w(n)$ must go to infinity as well, which again means that subclasses beyond some $N$ would never be considered. }

Of course, it is still always possible to adopt non-uniform weights, effectively introducing modified penalty terms. This allows us to encode as inductive bias a further preference for certain hypothesis subclasses over others. 
The MDL-SRM rule is an extreme version of doing so, with different penalties for individual hypotheses. By the previous reasoning, for finite data-size $m$, we are here in a situation where we are effectively working with a \emph{finite} superclass $\mathcal{H}$. Then a choice of uniform weight function  comes down to using the ERM rule with the superclass, while a non-uniform weight function expresses a certain  preference among hypotheses.

The latter might be a good approach in some scenarios, for instance if we have beliefs about the expected performance of different hypotheses.\footnote{Or by more pragmatic \emph{luckiness} reasoning: see section  \ref{sssec:luck} and footnote \ref{fn:luck}.} 
	But special cases of this kind do not show that capacity does not a play a role in regularization by SRM: only that we can engineer it so that all capacities are the same.\footnote{One 
	could think of wackier choices, like designing, for a finite sequence of classes of increasing capacity, a weight function which exactly counteracts the capacities. Again, this does not show that capacities do not play a role in the regularization: only that one could, with some effort, set up things so as to neutralize this effect. } As soon as we make a (more standard) choice for a nested subsequence of increasing capacity, these capacities explicitly appear in the regularization.

\subsubsection{Something else}
Even in the latter standard case, some authors are wary of talking about simplicity. 
\citet[sect.\ 3]{HarKul07} use scare quotes in the title of their section \emph{Induction and ``Simplicity''} on SRM, and consistently talk about SRM as trading off fit against ``something else.'' They indeed ``prefer to
think of VC dimension as providing an \emph{alternative} to simplicity'' \citeyearpar[p.\ 53]{HarKul09abs}.

Their reservations appear to be those already discussed in section \ref{sssec:simpl} above, that capacity  does not necessarily align with other conceptions of simplicity: the ``relevant ordering [\dots]\ is not a simplicity ordering, at least if sine curves count as `simple{'}'' \citep[p.\ 73]{HarKul07}. As we discussed, however, capacity gives \emph{a} natural notion of the simplicity of a hypothesis class. Moreover, the notion of capacity possesses a formal robustness that conceptions deriving from the number of parameters or the ``bumpiness'' of individual hypotheses must lack. 

Yet the worry might persist that it is odd to evaluate simplicity at the level of hypothesis classes, rather than at the level of individual hypotheses. 
What regularization penalty any individual hypothesis is subjected to depends on our initial choice of carving up the superclass in a subclass sequence, and what capacity subclass this hypothesis ends up being in. This, again, need not link to any  notion of the simplicity of the individual hypothesis.

This brings us back to the theme of model-relativity. 
It depends on our initial choice of inductive model how exactly simplicity plays a role in the regularization.  
However, even if this initial choice of subclasses is ours, the subclasses then come attached with a robust notion of their simplicity qua capacity, which automatically plays a role in the SRM regularization. 
It would certainly be nice if we had a further  formal link to the inherent simplicity of individual hypotheses; but we do not. What we do have, or so I will finally argue shortly, is a methodological justification for making a choice of subclass sequence and then regularize accordingly. The regularization term features the capacities of the subclasses, and so there is a trade-off between fit and capacity. If capacity is a natural notion of simplicity, then it is natural to talk about a trade-off between fit and simplicity, and to phrase norm \eqref{norm:m2} in those terms.

\subsection{Methodological?}\label{ssec:argmethod}
A simplicity preference would be methodologically justified, as opposed to merely reflecting an (ontological) assumption  that the world (the domain) is simple, if this simplicity preference can be shown to be beneficial to successful learning, \emph{even without such a simplicity assumption}.

\subsubsection{Not true: clever}
This is exactly what we have in the case of SRM and the  end of (universal) consistency, for uncountable domain (sect.\ \ref{sssec:just:univcons} above). In order to have the guarantee of consistency, it will not do to use ERM: we must employ regularization in accordance with SRM. This is the case even if we do not believe that the domain is simple: the consistency guarantee holds irrespective of whether the Bayes classifier is in a simple class or not. Nor does it matter how we carve up the superclass in a hypothesis class sequence, and so what exact regularization is implemented; but we have to do it somehow, to have the guarantee.

The picture suggested by Norton's material theory, that an inductive method is good if and only if its inductive model matches the material facts (is ``true''), is therefore too coarse. As \citet[p.\ 33]{Gru07} writes, a learning method is 
\begin{quote}
just a \emph{strategy} for inferring models from data (``choose
simple models at small sample sizes''), not a statement about how the world
works (``simple models are more likely to be true'') – indeed, a strategy cannot be true or false, it is ``clever'' or ``stupid.'' And the strategy of preferring
simpler models is clever even if the data-generating process is highly complex [\dots].
\end{quote}

\subsubsection{A pragmatic search strategy}
Various authors have observed that it is clever to adopt a ``search strategy'' of transitioning, as more data comes in, from the simple to the more complex (e.g., \citealp[sect.\ 3]{Kor04mm}; \citealp[sect.\ 3.6]{HarKul07}). What is also pleasing about this picture is its apparent structural similarity to the justification in formal learning theory of a simplicity preference as keeping us on ``the straightest possible path to the truth'' (\citealp{Kel07pos}; cf.\ \citealp{Ste09jpl}).

One might reply, however, that this is more of a \emph{pragmatic} justification for simplicity. The standard pragmatic motivation for simplicity is that we prefer to work with simpler, more convenient theories; the standard example is Mach's view of science as aiming to ``compress'' our experience. \citet[sec.\ 6.3]{Nor21} dismisses this ``simplicity as mere economy of expression,''  as one special case where ``simplicity is sought merely for pragmatic reasons'' (ibid., p.\ 179). But his other special case is precisely the above ``simplicity for economy of search,'' from Popper's conjectures and refutations\footnote{Popper's methodology has also been linked to statistical learning theory \citep{Vap00,CorSchVap09jgps,Ste09jpl}. I will abstain from discussing the merits of this association.} 
	to the justification from formal learning theory.

Norton is right that both cases are weaker than the idea that ``simplicity functions epistemically as a marker of truth; we are to choose the simpler hypothesis or theory because, we are assured, it is more likely to be true'' \citeyearpar[p.\ 178]{Nor21}. A defense of a simplicity preference  as a clever search strategy does also have a pragmatic flavor.\footnote{In 
	the case of formal learning theory, one can say that while finding the truth is an epistemic end, finding it as quickly as possible is already a pragmatic concern (cf.\ \citealp[fn.\ 4]{Str09abs}).} 
	Still, I think it is useful to distinguish the former ``merely'' pragmatic justification (we like simple hypotheses for reasons of convenience) 
	from a \emph{methodological} justification. I will here understand a methodological justification as one which may contain pragmatic elements, but is still tied to advancing the core epistemic end in statistical learning theory, predictive accuracy (low true risk).\footnote{One 
	might reply that the whole approach and framework of statistical machine learning is already thoroughly pragmatic, because of its instrumentalist concern with predictive accuracy as opposed to truth-finding (see, e.g., \citealp[ch.\ 4]{Ots23}). I do not want to deny the difference between truth-finding and predictive accuracy (though see \citealp{Lin24arxiv}), but I still think it is sensible to consider predictive accuracy an epistemic end, and talk about methodological principles as being tied to this end.} 
	Moreover, again, if this is a justification which does not crucially rely on an explicit simplicity assumption, then we can also distinguish it from a ``mere'' ontological principle.

Nevertheless, the above methodological justification of SRM's simplicity preference as a clever search strategy is not fully satisfying. After all, we discussed how (universal) consistency is arguably too weak to be a useful property, and identified the given-sample-size reliability and wysiwyg guarantees as the strongest theoretical justification (sect.\  \ref{ssec:srmthjusts} above). This redirects us from a picture where we have a ``search strategy'' for (unboundedly) growing data-sizes, to one where we need to decide on a good learning method for a given-size dataset.\footnote{Both
	the ontological and the clever search strategy justification for Occam's razor have a long pedigree in the literature. For instance, \citet[sect.\ 9.2.5]{DudHarSto01} give an evolutionary account of the empirical success of Occam's razor, explaining both that  ``we are more likely to ignore problems for which Occam’s razor does not hold'' (restricting ourselves to those where simplicity is a good assumption) and that we are moved towards a pragmatically useful ``design methodology'' of simple to more complex. The reasoning which I will present now in defense of a methodological justification does not, to the best of my knowledge, have a clear precursor.}

\subsubsection{Luckiness}\label{sssec:luck}
But turning to the reliability and wysiwyg guarantees for SRM, it seems more problematic to uphold that  a simplicity preference is good regardless of whether the ``truth'' is simple. After all, in the example inspired by \citet{BarCevGne22mam}, we saw that the SRM guarantees are better or worse than those for ERM depending on  ``the world'' (sect.\ \ref{sssec:modelrelclasses}). 

Nevertheless, there is a reason why, from the perspective of these guarantees, it is clever to use SRM and regularization, even if we do not want to commit to any simplicity assumption. This reason is related to the principle of \emph{luckiness} in MDL inference \citep{Gru07,RooGru11incl}.\footnote{\label{fn:luck}The 
	original idea  maps more directly to a choice of non-uniform weight function: by giving a small number of  classes small weights the reliability bounds for these classes will be significantly better, at the cost of slightly worse bounds for the other classes (see \citealp[p.\ 92]{Gru07}).} 

The idea is that by opting for SRM and regularization (rather than for ERM on the superclass), we stand to gain significantly (if we are lucky), at little cost (if we are not). Specifically, if we are lucky and the true distribution is such that a hypothesis class early in the sequence we chose is good (contains hypotheses with low true risk), then we have a significantly stronger reliability guarantee; whereas if we are unlucky and this is not the case, then the reliability guarantee is only little worse. This effect is illustrated in figure \ref{fig:luck1}, which plots (for standard choice of $\delta = 0.05$ and $m$ in the range 1,000--5,000) the reliability error bounds for the example of linear versus quadratic separators.\footnote{This observation is similar to but not quite the same as the claim that the ``size of the error is about the same as if we had known $n$ beforehand, and minimized the empirical error over $\mathcal{H}_n$'' (\citealp[p.
 294, my notation]{DevGyoLug96}; also see \citealp[p.\ ?]{ShaBen14}). The authors make a comparison between the ERM and the SRM reliability bounds for subclass $\mathcal{H}_n$, whereas the luckiness reasoning is a practically more relevant comparison between the ERM bound for the superclass $\mathcal{H}$ and the SRM bounds for the subclasses $\mathcal{H}_n$. }

\definecolor{clrbl}{RGB}{61,91,153}
\definecolor{clrgr}{RGB}{106,176,76}
\definecolor{clror}{RGB}{255,159,0}
\begin{figure}
    \centering
    \begin{minipage}{0.5\textwidth}
        \centering
        \pgfplotsset{scaled y ticks=false}
        \begin{tikzpicture}
          	\begin{axis} [axis lines=center, xmin = 1000, ymin=0.0275, ymax=0.085, width=6cm, yticklabel style={
        /pgf/number format/fixed}, ytick={0.04,0.06,0.08}, xlabel={$m$}, xtick = {2000,4000}, ylabel = {$\epsilon$},]
    			\addplot [domain=0:5000, smooth, thick, clrbl] {  sqrt((5-(ln(0.05)/ln(10)))/x) }; 			
    			\addplot [domain=0:5000, smooth, thick, clrgr] {  sqrt((4-(ln(0.05)/ln(10))-(ln(0.5)/ln(10)))/x) };
    			\addplot [domain=0:5000, smooth, thick, clror] {  sqrt((5-(ln(0.05)/ln(10))-(ln(0.5)/ln(10)))/x) };
 			\end{axis}
        \end{tikzpicture}
        \vspace{-3mm}
        \captionsetup{width=0.9\textwidth}
        \caption{\small Linear v.\ quadratic. Plotted are the error bound \eqref{eq:ermvsrm1} for ERM (blue) on $\mathcal{H}=\mathcal{H}_2$, and the bounds \eqref{eq:ermvsrm2} for SRM for $\mathcal{H}_1$ (green) and $\mathcal{H}_2$ (red).}\label{fig:luck1}
    \end{minipage}\hfill
    \begin{minipage}{0.5\textwidth}
        \centering
        \vspace{8mm}
        \pgfplotsset{scaled x ticks=false}
        \begin{tikzpicture}
          	\begin{axis} [axis lines=center, xmin = 10000, ymin=0.0, ymax=0.11, width=6cm, xlabel={$m$}, xtick = {20000,40000}, ylabel={$\epsilon$}, yticklabel style={
        /pgf/number format/fixed}, ytick={0.02,0.06,0.1}]
    			\addplot [domain=0:50000, smooth, thick, clrbl] {  sqrt((100-(ln(0.05)/ln(10)))/x) }; 			
    			\addplot [domain=0:50000, smooth, thick, clrgr] {  sqrt((1-(ln(0.05)/ln(10))-(ln(0.01)/ln(10)))/x) };
    			\addplot [domain=0:50000, smooth, thick, clrgr] {  sqrt((25-(ln(0.05)/ln(10))-(ln(0.01)/ln(10)))/x) };
    			\addplot [domain=0:50000, smooth, thick, clrgr] {  sqrt((50-(ln(0.05)/ln(10))-(ln(0.01)/ln(10)))/x) };
    			\addplot [domain=0:50000, smooth, thick, clrgr] {  sqrt((75-(ln(0.05)/ln(10))-(ln(0.01)/ln(10)))/x) };
    			\addplot [domain=0:50000, smooth, thick, clror] {  sqrt((100-(ln(0.05)/ln(10))-(ln(0.01)/ln(10)))/x) };
 			\end{axis}
        \end{tikzpicture}
        \vspace{-3mm}        
        \captionsetup{width=0.9\textwidth}
        \caption{\small Nested sequence $(\mathcal{H}_n)_{n<100}$ with VCdim$(\mathcal{H}_n)=n$. Plotted are the error bound \eqref{eq:ermvsrm1} for ERM on $\mathcal{H}=\mathcal{H}_{100}$ (blue), and the bound \eqref{eq:ermvsrm2} for SRM for $\mathcal{H}_{100}$ (red) and for $\mathcal{H}_1, \mathcal{H}_{25},\mathcal{H}_{50},\mathcal{H}_{75}$ (green). }\label{fig:luck2}
    \end{minipage}
\end{figure}

There are two immediate problems with this idea. First, in our example, the difference between lucky and unlucky does not look \emph{that} significant. But this is not very surprising: that for the (already quite simple) superclass of quadratic separators, a single decomposition gives us a little, but not that much. The effect does get quite significant for  larger classes and decompositions. Figure \ref{fig:luck2} shows (for $\delta=0.05$ and $m$ in the range 10,000--50,000) the reliability bounds for a nested sequence $(\mathcal{H}_n)_{n<N}$ for $N=100$, with VCdim$(\mathcal{H}_n)=n$.\footnote{Here and in the following I assume a uniform weight function $w: n \mapsto 1/N$.} We see that the reliability bounds for SRM are better for by far most subclasses, and many significantly so (e.g., the SRM bound for $\mathcal{H}_{25}$ is about half that of ERM for the superclass).\footnote{In general, for nested sequences $(\mathcal{H}_n)_{n<N}$, the difference is between a term VCdim$(n) - \log N^{-1} = n + \log N$ (in the SRM bound for $\mathcal{H}_n$) and VCdim$(\mathcal{H}_N) = N$ (in the ERM bound). The larger $N$, the more insignificant the $\log N$ compared to the $N$ term, and by definition all $n \leq N$, so for an increasing fraction of $n$ we have a (significantly) better bound. (Though this effect is tempered by the division by sample size $m$.) }

Still, and this is the second problem: if the hypothesis class that we want to work with is too large to give useful reliability bounds for ERM, so that we are drawn to carving up the class and using SRM, the luckiness principle gives little comfort for the possibility  that we are unlucky. It is no good to know that the reliability bound in that unlucky case is not much worse than the reliability bound for ERM, if we already found the ERM bound too weak. To complete the case that SRM is still a clever idea, we need to involve the other component of the theoretical justification, the wysiwyg guarantee.

\subsubsection{What you see}
The wysiwyg guarantees are stronger for the earlier classes in our chosen sequence, whether we are lucky or not. That is, with high probability, for an earlier class in the sequence, the wysiwyg bound  \eqref{eq:srmwysiwyg} is sharp. That means that if the SRM rule selects a hypothesis with low empirical error from an early class, we have some reason to conclude that this hypothesis also has low true error (that we are, in fact, in the lucky case) and/or to act accordingly (to proceed with this hypothesis).\footnote{The
	earlier proviso regarding frequentist guarantees of course still applies (sect.\ \ref{sssec:ermjust2}): strictly speaking, we need some further epistemic or decision-theoretic bridge principle from the guarantee to the particular outcome. }

What if the SRM rule selects a hypothesis with \emph{high} empirical error from an early class? We still have that, for earlier classes in the sequence, the wysiwyg bound  is sharp. So if the empirical error of the selected hypotheses is high, we have some reason to conclude that the true error of the hypothesis and so indeed the hypothesis class is also bad (that we are, in fact, in the unlucky case) and/or to act accordingly (to go back to the drawing board). In either case, the wysiwyg guarantee tells us that, with high probability, we have an indication whether we are on the right track, which allows us to act accordingly.

This, of course, still leaves the possibility that, despite the larger penalty term, SRM selects a hypothesis far off in the sequence, with weak wysiwyg bounds. But in that case, we in principle have this information: that SRM selected from one of the far-off classes. That in itself strongly indicates that our inductive model was off: that we are not in the lucky case.\footnote{Here
	the (informal) reasoning is that this would be \emph{very} unlikely to happen if we are, in fact, in the lucky case, because it means that the data is not only such that the good classes do bad, but so bad that the additional penalty for the bad classes is overcome. The reasoning is somewhat informal because by assuming the lucky case we are assuming some subset of possible true distributions, and precise probabilistic statements would depend on what this subset is.} 
	Hence, again, we have reason to act accordingly: to go back to the drawing board.

\subsubsection{Pragmatic, ontological, methodological}
There are clearly pragmatic elements to the previous reasoning. The wysiwyg justification is pragmatic rather than epistemic, insofar as it is not a guarantee of accurate prediction, but of having an indication of this accuracy, which derives its importance from the need to make a decision what to do next. The reliability guarantee is a better candidate for a purely epistemic justification, but as we saw it needs the wysiwyg to complete the justification for \eqref{norm:m2}. In any case, similarly to norm \eqref{norm:m1}, the actual implementation of the norm would involve various further considerations and decisions. Standing out here are of course the decision that \eqref{norm:m1} is infeasible, and the choice of hypothesis class sequence in implementing SRM.

Moreover, an ``ontic'' element remains in that the reliability guarantees are stronger if the hypothesis sequence is a better match with the ``truth.'' Insofar as this choice of inductive model is a reflection of our prior knowledge, it constitutes a better inductive bias if our prior knowledge is indeed accurate.  

Nevertheless, the reasoning still underpins a methodological norm, which neither collapses to a purely pragmatic simplicity preference nor to a purely ontic simplicity assumption. It is not purely pragmatic because it is still tied to the epistemic end of predictive accuracy: the means-ends argument for \eqref{norm:m2} says that in order to have reliability (good accuracy) and wysiwyg (good indication of accuracy) guarantees, it is a good idea to regularize. It is not a purely ontic assumption, because regularization is still a good idea even if we are not sure our chosen hypothesis sequence and corresponding regularization terms reflect the domain well.\footnote{Indeed, the reasoning shows that it is still a good idea to use regularization if we are given some default ordering of classes, or the ordering is implicit in the adopted regularization procedure.}


\section{Conclusion}\label{sec:concl}
It is important to be clear about the limitations of the argument. Since it is embedded in statistical learning theory, the argument is confined to the framework's idiosyncrasies: its exclusive concern with predictive accuracy, under the assumption of a stable underlying probability distribution.  The corresponding frequentist nature of the theoretical justification raises its interpretational questions, and the reasoning essentially relies on randomness in the data; in short, this is certainly not the full story about simplicity in science \citep[sect.\ 3]{Kel11incl}. The model-relativity of the theoretical justification, together with a formal notion of simplicity that attaches to classes rather than individual hypotheses, further results in a means-ends reasoning that is less straightforward than earlier attempts to defend Occam's razor in machine learning---but as such also does justice to critiques of these attempts (e.g., \citealp{Dom99dmkd}; also see again \citealp{Ste25mam}). Finally, I am not making claims about the extent to which the argument applies to regularization techniques, like drop-out or early stopping in deep learning, which are theoretically less well understood. But I do claim that the argument gives a methodological, non-circular justification for the norm of trading off fit for simplicity---and as such does justice to a central methodological practice in machine learning.

Or have this norm and practice been overtaken by events too? The contemporary debate about ``benign interpolation'' and ``double descent'' essentially revolves around the observation that not just norm \eqref{norm:m1} but also norm \eqref{norm:m2} no longer seem (as) relevant: we achieve good learning with extremely complex classes and no regularization \citep{Bel21ac,BarMonRak21ac}. A philosophical appraisal of the consequences of this observation, and in particular of the renewed evocation of a principle of Occam's razor at the center of a suitably improved theory of generalization, must be postponed to further work. But an important precondition for such work is clarity on the principle's role in the classical theory, as offered here.





\small

\appendix
\section*{Appendix}\label{appx}

\begin{thrm}\label{thrm:ermfailnonunif}
ERM fails to nonuniformly learn any non-VC hypothesis class $\mathcal{H}$.  
\end{thrm}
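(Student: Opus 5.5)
The plan is a direct no-free-lunch style construction that exploits the infinite VC dimension at every sample size. The idea is to fix one reference hypothesis $h^* \in \mathcal{H}$ and argue by contradiction. Suppose $\mathrm{ERM}_\mathcal{H}$ nonuniformly learned $\mathcal{H}$ with sample complexity $m^\mathrm{nul}_\mathcal{H}$. Fix $\epsilon = 1/4$ and $\delta = 1/2$, and put $m_0 = m^\mathrm{nul}_\mathcal{H}(\epsilon,\delta,h^*)$. Then for every $m \geq m_0$ and \emph{every} distribution $\mathcal{D}$, property \eqref{eq:consbound} with reference $h^*$ would have to hold. The crucial point is that $m_0$ may depend on $h^*$ but not on $\mathcal{D}$, so I am free to choose $\mathcal{D}$ after fixing $m \geq m_0$.

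Since $\mathcal{H}$ is not a VC class, it shatters some set $C \subset \mathcal{X}$ of size $2m$. I take $\mathcal{D}$ to be the distribution whose marginal on instances is uniform on $C$, with labels deterministically given by $y = h^*(x)$. Then $L_\mathcal{D}(h^*) = 0$, so \eqref{eq:consbound} demands $L_\mathcal{D}(\mathrm{ERM}_\mathcal{H}(S)) \leq 1/4$ with probability at least $1/2$. Consider any sample $S \sim \mathcal{D}^m$. It contains at most $m$ distinct instances from $C$, so at least $m$ points of $C$ are unseen. Because $C$ is shattered, there is some $h_S \in \mathcal{H}$ that agrees with $h^*$ on the sample instances and disagrees with $h^*$ on every unseen point of $C$. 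This $h_S$ has $L_S(h_S) = 0$, so it is an empirical risk minimizer. At the same time, $L_\mathcal{D}(h_S) \geq m/2m = 1/2 > 1/4$. An ERM implementation whose tie-breaking outputs $h_S$ therefore violates \eqref{eq:consbound} with probability $1$, which is the desired contradiction.

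The main obstacle is the tie-breaking issue flagged in the footnote to the ERM definition. The construction above only defeats \emph{some} implementation of the ERM rule, since a lucky implementation might return $h^*$ itself. I would read the theorem, in line with that footnote, as the claim that nonuniform learnability is not guaranteed for the ERM rule as such, that is, uniformly over its implementations. If a stronger claim against a fixed tie-breaking rule is wanted, I would first try to remove ties by adding label noise. The second attempt would let $C$ be shattered of size $k \gg m^2$, so that sample points are distinct with high probability, and flip labels with some small rate $\eta$. Then $h^*$ has positive empirical error with high probability, while shattering still yields zero-empirical-error hypotheses that strictly beat $h^*$ on the sample. One would then need to argue that among those the rule cannot systematically favour agreement with $h^*$ off-sample, and this is the step I am least sure can be made implementation-independent.
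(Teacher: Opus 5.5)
\textbf{There is a genuine gap.} Your noiseless construction is correct, but as you concede, it only shows that \emph{some} tie-breaking of the ERM rule fails. That is strictly weaker than what the paper proves, since its argument applies to every empirical risk minimizer. It is also weaker than what the main text needs: that no ERM, however ties are broken, has the guarantee, so that one must genuinely trade fit for simplicity.

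No refinement of distributions labeled deterministically by $h^*$ can close this gap. Take $\mathcal{H}=\cup_n\mathcal{H}_n$ with nested VC classes; by the characterization of nonuniform learnability, this is the only case where the theorem has content. The ERM implementation that breaks ties by taking a minimizer from the earliest possible $\mathcal{H}_n$ returns, on such a sample, a zero-error hypothesis inside $\mathcal{H}_{n^*}\ni h^*$. Uniform convergence on $\mathcal{H}_{n^*}$ then yields \eqref{eq:consbound} with a sample size depending only on $h^*$. So your weakened reading leaves open exactly what the theorem is meant to exclude: that a simplicity preference used only to break ties among perfect fits suffices. Label noise is essential, not a technicality.

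Your noisy repair fails because of the scaling $|C|=k\gg m^2$. Every minimizer is indeed forced to fit the roughly $\eta m$ flipped labels, but these points carry total mass about $\eta m/k=o(1/m)$, so the forced overfitting costs almost nothing. Off-sample, for your fixed $h^*$, a tie-breaking rule can simply copy $h^*$, since shattering supplies such a minimizer. So the step you were unsure about cannot be established: for that rule it is false.

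The missing idea is to put the damage \emph{on} the sample with non-negligible mass, which requires $|C|=\Theta(m)$. That is exactly your original $|C|=2m$, now combined with constant noise. The paper takes the uniform marginal on $C$ and labels each instance by $h^*(x)$ with probability $2/3$ and by $1-h^*(x)$ otherwise. The argument then runs as follows.
\begin{enumerate}
\item By shattering, every empirical risk minimizer must reproduce the observed label at each instance occurring exactly once in $S$. Otherwise, changing its value at that single point of $C$ strictly lowers $L_S$.
\item The expected number of such singletons is $m(1-\frac{1}{2m})^{m-1}\to e^{-1/2}m$. Reverse Markov bounds give, for large $m$, at least $m/2$ singletons with probability above $1/5$.
\item Given this, with probability at least $1/9$, at least a quarter of those singletons are mislabeled.
\item On the resulting $\geq m/8$ points, of mass $\geq 1/16$, every ERM pays $2/3$ where $h^*$ pays $1/3$. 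No classifier pays less than $1/3$ anywhere.
\end{enumerate}
Hence $L_\mathcal{D}(\mathrm{ERM}_\mathcal{H}(S))-L_\mathcal{D}(h^*)\geq 1/48$ with probability above $1/45$, whatever the tie-breaking. No reasoning about off-sample behavior is needed at all.
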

\begin{proof}
Choose any 
reference $h^\circ \in \mathcal{H}$. We show that for any large enough sample size $m$, there exists a true distribution $\mathcal{D}_m^*$ such that
\begin{align}
 \mathbb{P}_{S \sim (\mathcal{D}_m^*)^m}\left[L_{\mathcal{D}_m^*}(\mathrm{ERM}_\mathcal{H}(S)) -  L_{\mathcal{D}_m^* }(h^\circ) \geq \frac{1}{48} \right] > \frac{1}{45},
\end{align} 
which refutes nonuniform learnability.

Given $m$, choose a subset $C = \{x_1, \dots, x_{2m} \} \subset \mathcal{X}$ of $2m$ instances which is shattered by $h^\circ$ (such a set must exist since $\mathcal{H}$ has infinite VC dimension). Define $\mathcal{D}_m^*$ by 
\begin{align}
\mathcal{D}_m^*(\{x,y\})= \begin{cases}
  \frac{2}{6m} & \textrm{if } x \in C \textrm{ and  } y = h^\circ(x) \\
  \frac{1}{6m} & \textrm{if } x \in C \textrm{ and  } y \neq h^\circ(x) \\
  0 & \textrm{otherwise}.
\end{cases}
\end{align}
In words, $\mathcal{D}_m^*$ assigns uniform probability $(2m)^{-1}$ to all and only the instances in $C$, and with probability two-thirds labels the instance according to $h^\circ$. That is, under this distribution, $h^\circ$ is the signal but with substantial noise;  the proof idea is that ERM will with a certain probability overfit on noise, driving a wedge between its and $h^\circ$'s true risk.

We first show that, for large enough $m$, if we sample a size-$m$ sequence $S$ from $\mathcal{D}_m^*$, then with probability greater than one-fifth at least half of the instances $x$ in $S$ (so disregarding the labels) are only sampled once. Call such instances that only appear once \emph{singletons}. Let  random variable $Y$ denote the proportion of such singletons in $S$, and let random variable $X_i$ take value 1 if $x_i$ is a singleton and 0 otherwise. Utilizing the linearity of expectation, 
\begin{align}
\Expec_{S \sim (\mathcal{D}_m^*)^m} \left[ Y \right]  &=   \Expec_{S \sim (\mathcal{D}_m^*)^m} \left[ \frac{1}{m} \sum_{i=1}^{2m} X_i \right] \\
&= \frac{1}{m} \sum_{i=1}^{2m} \Expec_{S \sim (\mathcal{D}_m^*)^m} \left[  X_i \right] \label{eq:linexp}\\
&= \frac{1}{m} \sum_{i=1}^{2m} \binom{m}{1} \frac{1}{2m}\left(1-\frac{1}{2m} \right)^{m-1} \\
&=  \left(1-\frac{1}{2m} \right)^{m-1} 
>  \left(\left(1-\frac{1}{2m} \right)^{2m}\right)^{\frac{1}{2}}.
\end{align} 
By the standard limit $\lim_{x \rightarrow \infty} (1-x^{-1})^x = e^{-1} $ the last term converges for increasing $m$ to $e^{-1/2} \approx 0.607$, so that for sufficiently large $m$ we have  
\begin{align}
\Expec_{S \sim (\mathcal{D}_m^*)^m} \left[ Y \right]  > \frac{6}{10}.
\end{align} 
By Markov's inequality (see \citealp[app.\ B1]{ShaBen14}),
\begin{align}
\mathbb{P}_{S \sim (\mathcal{D}_m^*)^m}\left[ Y \geq \frac{1}{2} \right] &\geq \frac{\Expec_{S \sim (\mathcal{D}_m^*)^m} \left[ Y \right] - \frac{1}{2}}{1-\frac{1}{2}}   
> \frac{\frac{6}{10}- \frac{1}{2}}{1-\frac{1}{2}} = \frac{1}{5}. 
\end{align} 

Further, let $Z$ denote the proportion of sampled singleton instances that are  \emph{nonrepresentative}: that receive a different label than prescribed by $h^\circ$. Clearly, by definition of $\mathcal{D}_m^*$, for any subsequence of instances $x$ sampled from $(\mathcal{D}_m^*)^m$, the expected proportion of those instances that are subsequently labeled in a nonrepresentative manner equals one-third. In particular,
\begin{align}
\Expec_{S \sim (\mathcal{D}_m^*)^m} \left[ Z \right]  &= \frac{1}{3},   
\end{align} 
so that, again by Markov's inequality, the probability of at least a quarter of nonrepresentative instances among the singleton ones  is 
\begin{align}
\mathbb{P}_{S \sim (\mathcal{D}_m^*)^m}\left[ Z \geq \frac{1}{4} \right] &\geq \frac{\Expec_{S \sim (\mathcal{D}_m^*)^m} \left[ Z \right] - \frac{1}{4}}{1-\frac{1}{4}}  = \frac{1}{9}. 
\end{align} 
In sum, the probability of sampling a proportion of at least one-half singleton instances among which at least a quarter are non-representative is
\begin{align}
\mathbb{P}_{S \sim (\mathcal{D}_m^*)^m}\left[ Y \geq \frac{1}{2} \ \& \ Z \geq \frac{1}{4} \right] > \frac{1}{5} \cdot \frac{1}{9}=\frac{1}{45}. 
\end{align} 

Finally, if $S$ is such that it contains at least $m/8$ non-representative singleton instances, then ERM must also classify those instances in a non-representative manner. Since
\begin{align}
L_{\mathcal{D}_m^* }(h^\circ) = \frac{1}{3},
\end{align} 
we therefore have for such $S$ that
\begin{align}
L_{\mathcal{D}_m^*}(\mathrm{ERM}_\mathcal{H}(S)) -  L_{\mathcal{D}_m^* }(h^\circ) &= L_{\mathcal{D}_m^*}(\mathrm{ERM}_\mathcal{H}(S)) - \frac{1}{3} \\
 &\geq  \left( \frac{1}{16} \cdot \frac{2}{3} + \frac{15}{16} \cdot \frac{1}{3} \right)  - \left( \frac{1}{16} \cdot \frac{1}{3} + \frac{15}{16} \cdot \frac{1}{3}  \right) = \frac{1}{48}, 
\end{align} 
which concludes the proof.
\end{proof}

\end{document}